\documentclass{article}

\usepackage{microtype}
\usepackage{graphicx}
\usepackage{subcaption}
\usepackage{booktabs} % for professional tables

\usepackage{hyperref}

\usepackage[preprint]{icml2026}

\usepackage{amsmath}
\usepackage{amssymb}
\usepackage{mathtools}
\usepackage{amsthm}

\usepackage{eccvabbrv}
\usepackage{caption}
\usepackage{enumitem}

\usepackage{multirow}
\usepackage[normalem]{ulem}
\useunder{\uline}{\ul}{}

\usepackage[capitalize,noabbrev]{cleveref}

\theoremstyle{plain}
\newtheorem{theorem}{Theorem}[section]
\newtheorem{proposition}[theorem]{Proposition}

\theoremstyle{definition}

\theoremstyle{remark}

\usepackage[textsize=tiny]{todonotes}

\icmltitlerunning{Robust Depth Super-Resolution via Adaptive Diffusion Sampling}

\begin{document}

\twocolumn[
  \icmltitle{Robust Depth Super-Resolution via Adaptive Diffusion Sampling}

  % It is OKAY to include author information, even for blind submissions: the
  % style file will automatically remove it for you unless you've provided
  % the [accepted] option to the icml2026 package.

  % List of affiliations: The first argument should be a (short) identifier you
  % will use later to specify author affiliations Academic affiliations
  % should list Department, University, City, Region, Country Industry
  % affiliations should list Company, City, Region, Country

  % You can specify symbols, otherwise they are numbered in order. Ideally, you
  % should not use this facility. Affiliations will be numbered in order of
  % appearance and this is the preferred way.
  \icmlsetsymbol{equal}{*}

    \begin{icmlauthorlist}
    \icmlauthor{Kun Wang}{sutd}
    \icmlauthor{Yun Zhu}{njust}
    \icmlauthor{Pan Zhou}{smu}
    \icmlauthor{Na Zhao}{sutd}
%    \icmlauthor{Firstname5 Lastname5}{yyy}
%    \icmlauthor{Firstname6 Lastname6}{sch,yyy,comp}
%    \icmlauthor{Firstname7 Lastname7}{comp}
    %\icmlauthor{}{sch}
%    \icmlauthor{Firstname8 Lastname8}{sch}
%    \icmlauthor{Firstname8 Lastname8}{yyy,comp}
    %\icmlauthor{}{sch}
    %\icmlauthor{}{sch}
  \end{icmlauthorlist}

  \icmlaffiliation{sutd}{IMPL, Singapore University of Technology and Design\,}
  \icmlaffiliation{njust}{PCA Lab, Nanjing University of Science and Technology\,}
  \icmlaffiliation{smu}{LV-Lab, Singapore Management University}

  \icmlcorrespondingauthor{Kun Wang}{kun\_wang@sutd.edu.sg}
  \icmlcorrespondingauthor{Na Zhao}{na\_zhao@sutd.edu.sg}
%  \icmlcorrespondingauthor{Firstname2 Lastname2}{first2.last2@www.uk}

  % You may provide any keywords that you find helpful for describing your
  % paper; these are used to populate the "keywords" metadata in the PDF but
  % will not be shown in the document
  \icmlkeywords{Depth Estimation, Depth Super-Resolution, 3D Perception}
  
  \vskip 0.25in
  % teaser figure
  {\centering
  	\includegraphics[width=0.98\textwidth]{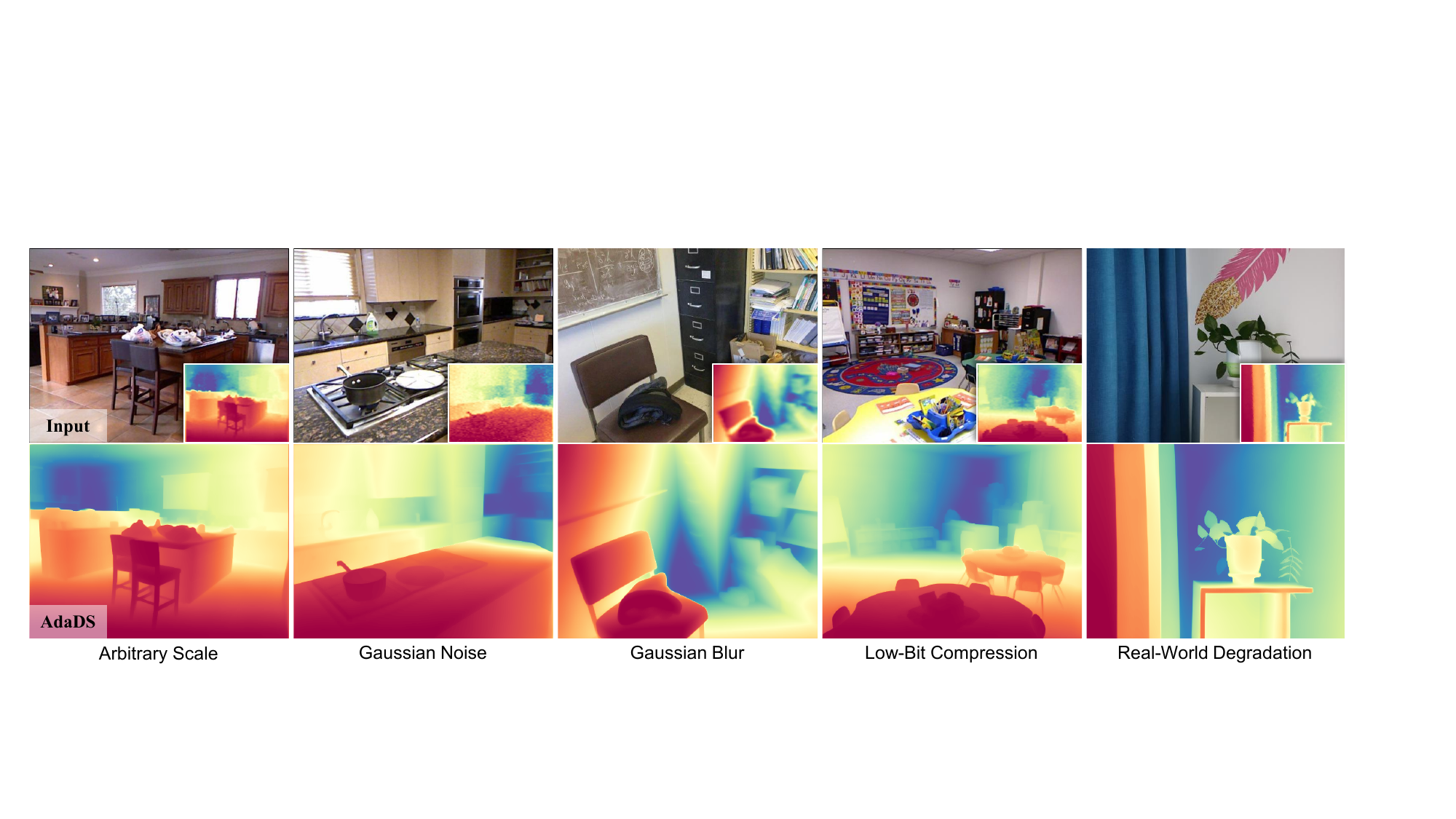}
  	\captionof{figure}{
  		We introduce \textbf{AdaDS}, a framework designed to address the persistent generalization challenges in depth super-resolution problem. Given an arbitrarily degraded low-resolution depth map and its corresponding RGB image, AdaDS employs zero-shot inference to reconstruct a high-resolution metric depth map with fine-grained structural details.
%  		We introduce \textbf{AdaDS}, a framework designed to tackle the long-standing generalization challenge in depth super-resolution task. Given an arbitrarily degraded low-resolution depth map and its corresponding image, AdaDS performs zero-shot inference to estimate a high-resolution metric depth map with fine-grained details.
  		}
  	\label{fig:teaser}
  	\par
  }
  
  \vskip 0.2in
]

% this must go after the closing bracket ] following \twocolumn[ ...

% This command actually creates the footnote in the first column listing the
% affiliations and the copyright notice. The command takes one argument, which
% is text to display at the start of the footnote. The \icmlEqualContribution
% command is standard text for equal contribution. Remove it (just {}) if you
% do not need this facility.

% Use ONE of the following lines. DO NOT remove the command.
% If you have no special notice, KEEP empty braces:
\printAffiliationsAndNotice{}  % no special notice (required even if empty)
% Or, if applicable, use the standard equal contribution text:
% \printAffiliationsAndNotice{\icmlEqualContribution}

\begin{abstract}
%	Depth Super-Resolution (DSR) aims to recover an accurate high-resolution depth map from coarse, low-resolution measurement under the guidance of color image. Due to various degradation of low-resolution measurement, such as blur, noise and inaccuracy,  
	We propose AdaDS, a generalizable framework for depth super-resolution that robustly recovers high-resolution depth maps from arbitrarily degraded low-resolution inputs. Unlike conventional approaches that directly regress depth values and often exhibit artifacts under severe or unknown degradation, AdaDS capitalizes on the contraction property of Gaussian smoothing: as noise accumulates in the forward process, distributional discrepancies between degraded inputs and their pristine high-quality counterparts diminish, ultimately converging to isotropic Gaussian prior. 
	% 从Observation到下面这一步的思路有点跳跃。
	Leveraging this, AdaDS adaptively selects a starting timestep in the reverse diffusion trajectory based on estimated refinement uncertainty, and subsequently injects tailored noise to position the intermediate sample within the high-probability region of the target posterior distribution. This strategy ensures inherent robustness, enabling generative prior of a pre-trained diffusion model to dominate recovery even when upstream estimations are imperfect. Extensive experiments on real-world and synthetic benchmarks demonstrate AdaDS's superior zero-shot generalization and resilience to diverse degradation patterns compared to state-of-the-art methods.
\end{abstract}
\vskip -0.2in

\section{Introduction}

Depth super-resolution (DSR) constitutes a vital component of monocular 3D perception \cite{dcdepth, fan2025depth, yan2025completion}. The task seeks to reconstruct an accurate, high-resolution depth map from a coarse, low-resolution measurement, guided by a corresponding high-resolution RGB image. By integrating rich semantic cues from readily accessible color images with metric priors from cost-effective range sensors (\eg Time-of-Flight (ToF) cameras and structured-light systems), DSR effectively mitigates the scale and geometric ambiguities inherent in monocular depth estimation. Consequently, it substantially reduces the cost of acquiring high-quality depth data, thereby bolstering performance in a wide range of downstream applications, including 3D scene reconstruction \cite{chung2024depth, altnerf}, AR/VR \cite{holynski2018fast, depthlab}, and robotics \cite{dong2022towards, sheng2023pdr}.

In recent years, numerous DSR methods have emerged, achieving remarkable results under controlled settings, such as fixed upsampling factors, specific scene types, and limited degradation patterns \cite{suft, zhong2023deep, sgnet}. However, robustness remains a critical limitation that hinders their practical deployment. Real-world depth inputs, acquired from hardware-constrained sensors, often exhibit complex and unpredictable degradations. As a result, most existing approaches suffer from substantial performance drops due to domain discrepancies, failing to reliably recover fine-grained, accurate depth maps in diverse real-world scenarios.

\begin{figure}
	\centering
	\includegraphics[width=\columnwidth]{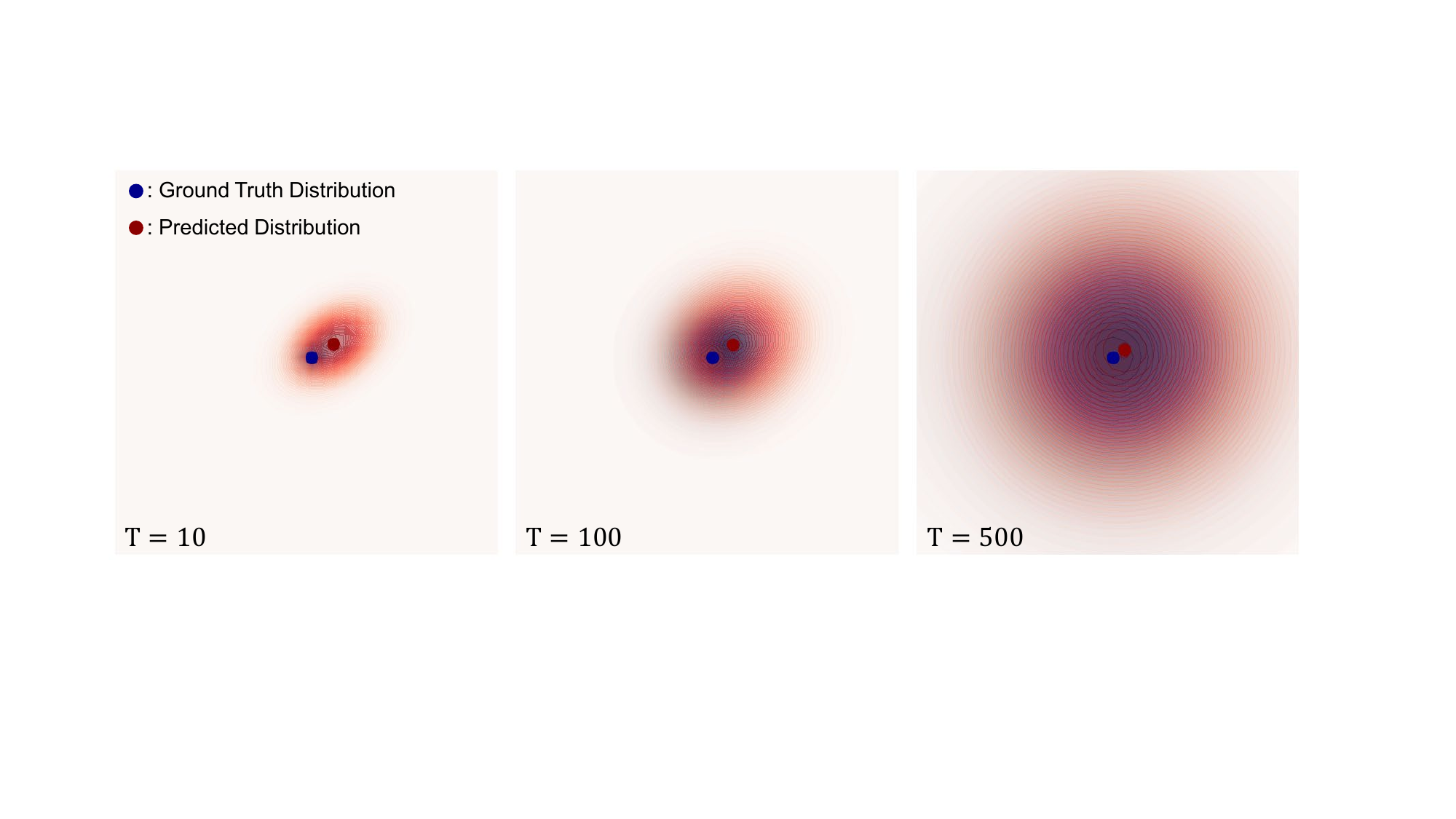}
	\caption{
		The distribution of the degraded input (with predicted mean and variance from coarse refinement) progressively aligns with the ground-truth distribution due to Gaussian smoothing contraction. Dots indicate the means of the 2D Gaussians.
	}
	\label{fig:fig2}
	\vskip -0.1in
\end{figure}

To address this challenge, we propose AdaDS that robustly handles arbitrary degradation patterns. Unlike conventional methods that rely on direct regression of depth values, which often struggle with severe or unseen degradations, AdaDS innovatively incorporates the high-quality depth distribution prior implicit in pre-trained diffusion models \cite{ddpm}. Our \textit{key insight} is that the forward diffusion process progressively erases distributional discrepancies between degraded low-resolution inputs and their pristine high-quality counterparts via Gaussian smoothing contraction, converging both to a shared isotropic Gaussian (see the 2D Gaussian analogy in Fig. \ref{fig:fig2}). This implies the existence of an optimal timestep that balances fidelity to the input content with sufficient alignment to the target high-quality posterior. 

Based on this insight, AdaDS operates in two stages. The first stage performs coarse refinement to produce an initial depth estimate and a variance map that quantifies uncertainty in the refinement. This uncertainty is then used to dynamically select a timestep that ensures both distributional similarity and input content preservation. In the second stage, a dedicated noise sampling module predicts timestep-conditioned noise that, when added to the coarse estimate, yields an intermediate latent aligned with the high-density manifold of the high-quality depth distribution. This carefully positioned noisy sample is denoised by the pre-trained diffusion model to recover the final high-resolution depth map. By design, even imperfections in the upstream refinement are tolerated, as adequate noising projects the sample onto the learned generative prior, conferring inherent robustness to diverse degradations.

The main contributions of this work are three-fold:
\begin{itemize}[itemsep=5pt,topsep=0pt,parsep=0pt]
	\item We introduce AdaDS, a highly generalizable framework that effectively addresses the long-standing robustness challenge in DSR, enabling accurate reconstruction across arbitrary upsampling factors, diverse degradation patterns, and unseen real-world scenes.
	\item We propose a novel adaptive sampling strategy that exploits the convergence property of forward diffusion processes to dynamically align an intermediate noisy sample with the high-probability region of the target high-quality depth posterior, thereby leveraging the powerful generative prior of a pre-trained diffusion model to achieve inherent robustness.
	\item We conduct extensive experiments on both real-world and synthetic benchmarks, demonstrating AdaDS's superior zero-shot generalization and resilience compared to state-of-the-art approaches.
\end{itemize}

\section{Related Work}

\textbf{Depth Super-Resolution.}\ \ 
Early DSR works mainly operated under synthetic degradation scenarios, where low-resolution inputs were generated by downsampling ground-truth depth maps. In this controlled setting, research primarily concentrated on restoring high-frequency details lost during downsampling \cite{guo2018hierarchical, dctnet, yuan2023recurrent}. The subsequent availability of real-world datasets \cite{rgbdd, tofdsr} introduced more practical and diverse degradation patterns, highlighting the limitations of prior approaches. Recent methods have sought to address these challenges through specialized techniques. For instance, SFG \cite{sfg} employs structure flow guidance to transfer structural information from RGB images; IDSR \cite{idsr} incorporates an auxiliary depth completion branch to handle invalid measurements; DORNet \cite{dornet} learns degradation-aware representations in a self-supervised manner to enhance cross-modal feature fusion; and SDCL \cite{sdcl} introduces semantic-driven contrastive learning to align depth edges with semantic contours. Despite achieving impressive performance on in-domain benchmarks, these approaches often exhibit limited generalization to out-of-distribution degradations and unseen real-world scenarios, constraining their practical deployment.

\begin{figure*}
	\centering
	\includegraphics[width=\textwidth]{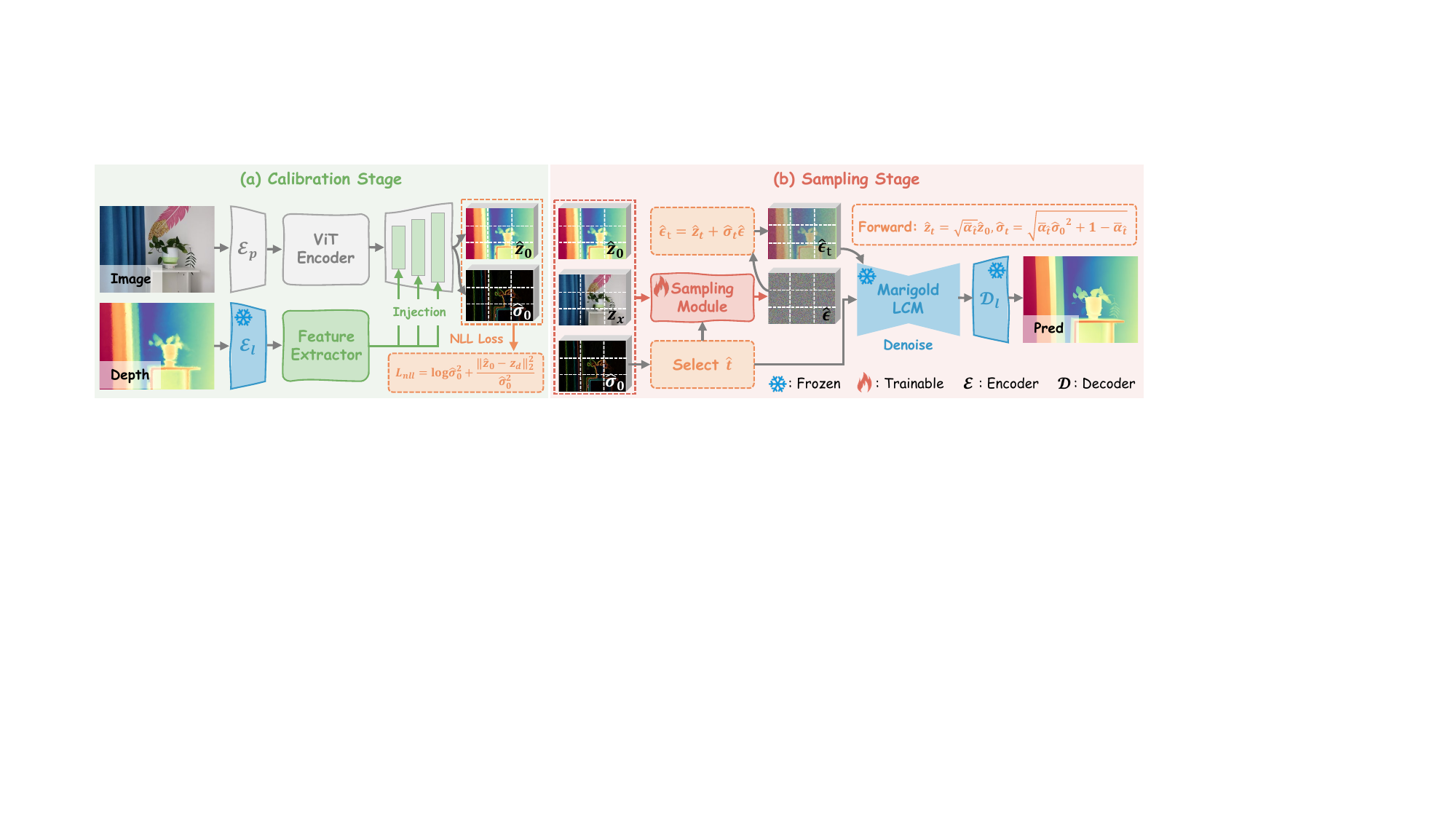}
	\caption{
		\textbf{Overall framework of AdaDS.} (a) The first stage produces a refined depth latent along with an estimated uncertainty map reflecting refinement reliability. (b) The second stage blends the coarse refinement with estimated noise, aligning it with the high-probability region of the target high-quality depth posterior. The produced noise is denoised to recover the final depth map via a pre-trained depth diffusion model. $\mathcal{E}_p$, $\mathcal{E}_l$ and $\mathcal{D}_l$ denote the patch embedding, latent encoder and latent decoder, respectively.
	}
	\label{fig:framework}
	\vskip -0.05in
\end{figure*}

% marigold, sharpdepth, lotus, betterdepth, geowizard, Jasmine
% marigold, depthfm, 

%Marigold pioneered the introduction of diffusion models into depth estimation, achieving fine-grained, high-quality zero-shot predictions by fine-tuning a pre-trained stable diffusion model on a small amount of synthetic data. Building on this, GeoWizard allows for mutual information exchange and enhances consistency by jointly estimating scene normals and depth; Lutos avoids harmful variance in the generation process by directly predicting annotations; depthfm accelerates model sampling by introducing flow matching; and Jasmine extends diffusion depth models to the self-supervised learning domain. Furthermore, betterdepth and sharpdepth combine the large-scale pre-trained priors of discriminative models with the fine-grained outputs of diffusion models, achieving a good balance between prediction accuracy and depth quality. We obtain high-quality depth outputs by adding appropriate noise to the input low-resolution depth map, projecting it into the forward diffusion process of a high-quality depth map, and utilizing the depth priors of the diffusion model to denoise to a high-quality depth map.
\textbf{Depth Diffusion Models.}\ \ 
Diffusion models have recently emerged as a powerful approach for high-fidelity depth estimation. Marigold \cite{marigold} pioneered this direction by fine-tuning a Stable Diffusion model on limited synthetic data, enabling strong zero-shot performance with fine-grained outputs. Subsequent works have advanced the paradigm. For example, GeoWizard \cite{geowizard} facilitates mutual information exchange between depth and surface normals for enhanced geometric consistency; Lotus \cite{lotus} mitigates undesirable variance in the generative process by directly predicting deterministic annotations; DepthFM \cite{depthfm} significantly accelerates sampling through flow matching techniques; and Jasmine \cite{jasmine} extends the diffusion framework to self-supervised learning settings. Furthermore, hybrid approaches such as BetterDepth \cite{betterdepth} and SharpDepth \cite{sharpdepth} have sought to combine the strong generalization of large-scale discriminative pre-training with the superior detail recovery of diffusion-based generative modeling, striking an effective balance between accuracy and visual sharpness. Unlike these methods, AdaDS explicitly conditions the diffusion process on a low-resolution depth input. By adaptively perturbing this input with learned, timestep-aligned noise, AdaDS leverages the pre-trained diffusion prior to perform robust denoising.
\section{Depth Diffusion Model Preliminary}

Our framework employs the pre-trained Marigold-LCM \cite{marigold} as the denoiser, benefiting from its strong modeling of high-quality depth distributions and efficient one-step inference capability. It is obtained by distilling a Denoising Diffusion Probabilistic Model (DDPM) \cite{ddpm} based teacher model using the Latent Consistency Model (LCM) \cite{consistency_models}, preserving the same marginal forward distributions as the original DDPM.

Specifically, given an RGB image $\mathbf{x}$ and its corresponding depth map $\mathbf{d}$, both are first encoded into the latent space $\mathbf{z}_x=\mathcal{E}_l(\mathbf{x})$ and $\mathbf{z}_d=\mathcal{E}_l(\mathbf{d})$ via a frozen VAE \cite{vae} encoder $\mathcal{E}_l(\cdot)$. In the forward diffusion process, the depth latent $\mathbf{z}_d$ is progressively corrupted by Gaussian noise according to the conditional distribution:
\begin{equation}
	p(\mathbf{z}_t\mid\mathbf{z}_d,\mathbf{z}_x) = \mathcal{N}\bigl(\sqrt{\bar{\alpha}_t} \,\mathbf{z}_d, \; (1 - \bar{\alpha}_t)\mathbf{I}\bigr),
	\label{eq:gt_forward}
\end{equation}
where $\bar{\alpha}_t = \prod_{s=1}^t (1 - \beta_s) \in [0,1]$ is the cumulative product of the noise schedule (with $\beta_s$ increasing over timesteps $t$), and as $t \to T$, $\mathbf{z}_t$ converges to standard Gaussian noise.
During inference, Marigold-LCM enables fast generation by directly predicting a denoised latent from an initial noisy sample. A standard Gaussian noise $\epsilon \sim \mathcal{N}(\mathbf{0}, \mathbf{I})$ is sampled, concatenated with $\mathbf{z}_x$, and passed to the model:
\begin{equation}
	\hat{\mathbf{z}} = \text{LCM}\bigl(\mathbf{z}_x \Vert \epsilon\bigr),
	\label{eq:lcm_inference}
\end{equation}
where $\Vert$ denotes channel-wise concatenation. The final depth map is then recovered as $\hat{\mathbf{d}} = \mathcal{D}_l(\hat{\mathbf{z}})$ via the pre-trained VAE decoder $\mathcal{D}_l(\cdot)$.

\begin{figure*}
	\centering
	\includegraphics[width=\textwidth]{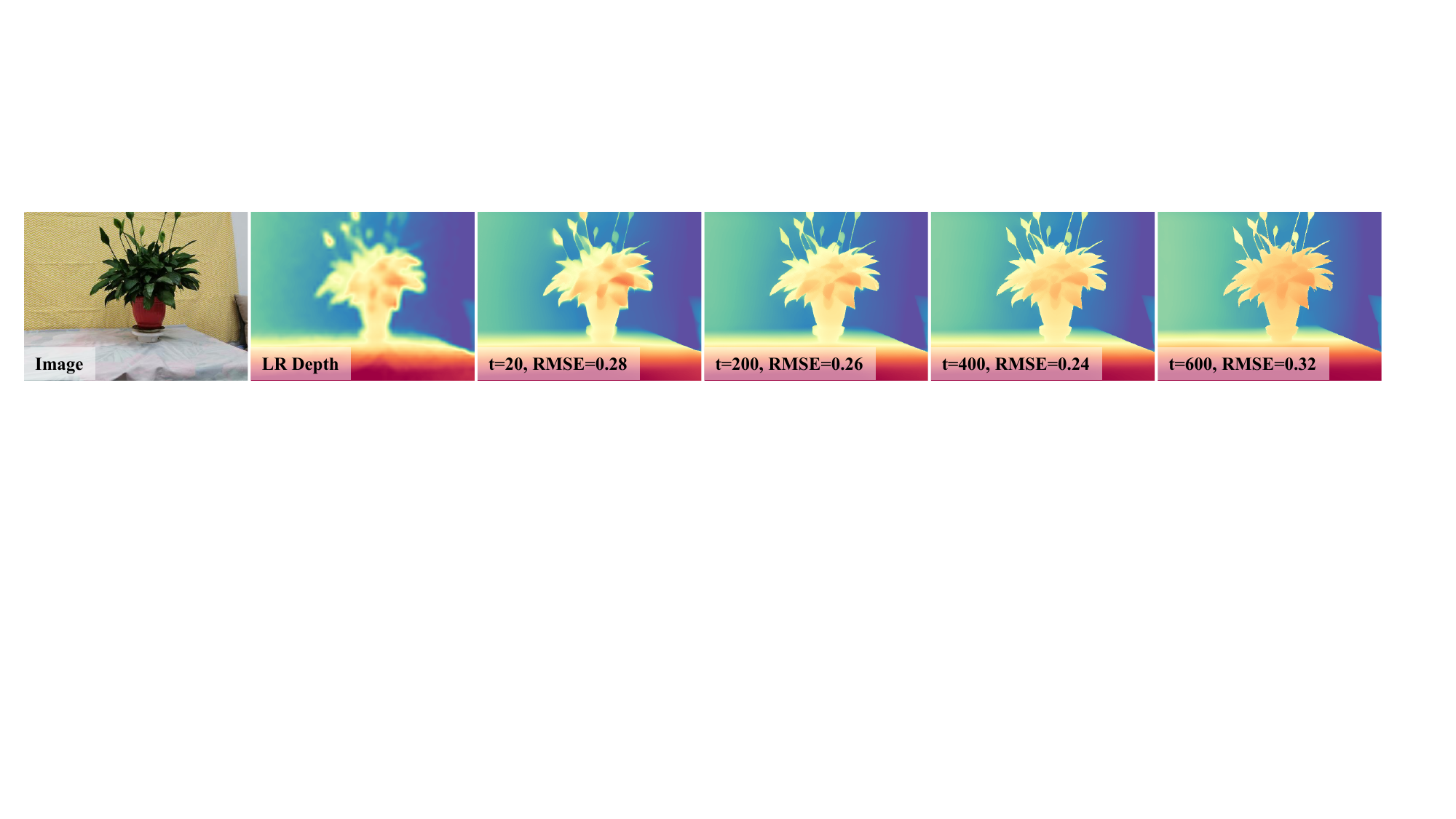}
	\caption{
		As $t$ increases, the reconstructed depth maps exhibit enhanced fidelity and sharper structural details. However, this is accompanied by an increased deviation from the input depth measurements. All depth maps are visualized using a unified color palette, with RMSE metric reported in centimeters.
%		AdaDS outputs visualizations of depth maps at different timesteps $t$. As $t$ increases, the quality of the generated depth maps improves, but the differences from the input depth maps also increase. All depth maps are visualized using a unified color palette.
	}
	\label{fig:t_demo}
	\vskip -0.05in
\end{figure*}

\section{AdaDS Framework}

AdaDS is motivated by the contraction property inherent in Gaussian smoothing within the forward diffusion process. Let $\mathbf{d}_{in}$ denote the degraded low-resolution (LR) depth input and $\mathbf{z}_{in} = \mathcal{E}_l(\mathbf{d}_{in})$ its latent representation. As shown in Eq. \eqref{eq:gt_forward}, the distributional discrepancy between $p(\mathbf{z}_t \mid \mathbf{z}_d, \mathbf{z}_x)$ and $p(\mathbf{z}_t^{in} \mid \mathbf{z}_{in}, \mathbf{z}_x)$ progressively diminishes as $\bar{\alpha}_t$ approaches 0. Simultaneously, the influence of the input content $\mathbf{z}_{in}$ is increasingly erased. This convergence property implies the existence of an optimal trade-off timestep that balances fidelity to the input content with sufficient alignment to the target high-quality depth posterior. Building on this insight, AdaDS strategically perturbs a refined depth latent with timestep-dependent noise at the selected timestep, projecting the resulting noisy sample into the high-density region of the high-quality depth manifold learned by a pre-trained diffusion model. The model's robust denoising capability then recovers a high-fidelity depth map from this carefully positioned intermediate. To implement this principle, we design AdaDS as a two-stage pipeline, as illustrated in Fig. \ref{fig:framework}.

\textbf{Calibration Stage.}\ \ 
The calibration stage is illustrated in Fig. \ref{fig:framework} (a). It performs initial refinement on LR depth with two primary objectives: (1) Reducing the gap between the LR input and its high-quality counterpart, thereby avoiding excessively large timesteps that would erase important input content; (2) Estimating the predictive uncertainty (in the form of variance) of the refinement, which serves as the basis for adaptive timestep selection in the second stage. 

To achieve these goals, we design a two-branch module $\mathcal{R}(\cdot)$ that takes the image $\mathbf{x}$ and interpolated $\mathbf{d}_{in}$ as inputs, producing a refined depth latent and its associated uncertainty: $(\hat{\mathbf{z}}_0, \hat{\sigma}_0) = \mathcal{R}(\mathbf{x}, \mathbf{d}_{in})$,
%\begin{equation}
%	(\hat{\mathbf{z}}_0, \hat{\sigma}_0) = \mathcal{R}(\mathbf{x}, \mathbf{d}_{in}),
%\end{equation}
where $\hat{\mathbf{z}}_0$ is the refined depth latent and $\hat{\sigma}_0$ denotes the predicted standard deviation at each spatial location. 
The image branch fine-tunes a variant of Depth Anything V2 \cite{dav2} to exploit its strong spatial perception capability. It consists of a patch embedding layer $\mathcal{E}_p(\cdot)$ that downsamples the input image $\mathbf{x}$ by a factor of 16, followed by a sequence of ViT \cite{vit} blocks for global feature extraction, and a DPT \cite{dpt} head for progressive feature upsampling. The depth branch first encodes $\mathbf{d}_{in}$ into the latent space $\mathbf{z}_{in}=\mathcal{E}_l(\mathbf{d}_{in})$ at $1/8$ scale. A UNet-like architecture \cite{unet} then extracts multi-scale features at $1/32$, $1/16$, and $1/8$ scales, which are injected into the DPT head of the image branch via element-wise addition to enhance depth-aware guidance. Training is supervised using two complementary losses. First, we adopt the Negative Log-Likelihood (NLL) loss \cite{nll_loss} to encourage well-calibrated uncertainty estimation:
\begin{equation}
	\mathcal{L}_{\text{nll}} = \log \hat{\sigma}_0^2 + \frac{\|\hat{\mathbf{z}}_0 - \mathbf{z}_d\|_2^2}{\hat{\sigma}_0^2},
\end{equation}
where $\mathbf{z}_d = \mathcal{E}_l(\mathbf{d})$ is the ground-truth depth latent. Second, to compensate for potential reconstruction errors introduced by the VAE encoder/decoder pair, we add an $\ell_1$ supervision on the decoded depth map:
\begin{equation}
	\mathcal{L}_d = \|\mathcal{D}_l(\hat{\mathbf{z}}_0) - \mathbf{d}\|_1.
\end{equation}

\textbf{Sampling Stage.}\ \ 
As shown in Fig. \ref{fig:framework}(b), the second stage builds upon the outputs of the calibration stage to define an approximate forward diffusion process centered on the refined latent:
\begin{equation}
%	p(\hat{\mathbf{z}}_t|\hat{\mathbf{z}}_0, \hat{\sigma}_0,\mathbf{z}_x) = \mathcal{N}\bigl(\sqrt{\bar{\alpha}_t} \,\hat{\mathbf{z}}_0, \; (\bar{\alpha}_t\hat{\sigma}_0 + 1 - \bar{\alpha}_t)\mathbf{I}\bigr).
	p(\hat{\mathbf{z}}_t \mid \hat{\mathbf{z}}_0, \hat{\sigma}_0, \mathbf{z}_x) = \mathcal{N}\bigl( \sqrt{\bar{\alpha}_t} \,\hat{\mathbf{z}}_0, \; (\bar{\alpha}_t \hat{\sigma}_0^2 + 1 - \bar{\alpha}_t) \mathbf{I} \bigr).
	\label{eq:pred_forward}
\end{equation}
Before detailing the sampling procedure, we first introduce and prove a \textit{proposition} that underpins our adaptive timestep selection strategy. Let $\mathcal{W}(\cdot,\cdot)$ denote the 2-Wasserstein distance, $\hat{p}_t$ the approximate forward distribution in Eq.~\eqref{eq:pred_forward}, and $p_t$ the ground-truth forward distribution in Eq.~\eqref{eq:gt_forward}. We define the following objective function that trades off distributional alignment and content preservation:
\begin{equation}
	\mathcal{H}(\bar{\alpha}_t) = \sqrt{\bar{\alpha}_t} \cdot \exp\left(-\lambda \mathcal{W}(\hat{p}_t, p_t)\right),
	\label{eq:h_func}
\end{equation}
where $\lambda > 0$ is a hyperparameter that controls the relative importance of the two terms.
\begin{proposition}
	For $\bar{\alpha}_t \in (0,1]$, the function $\mathcal{H}(\bar{\alpha}_t)$ admits a unique global maximum.
\end{proposition}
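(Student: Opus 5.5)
We compute $\mathcal{W}(\hat{p}_t,p_t)$ in closed form, write $\log\mathcal{H}$ as a function of $a=\bar{\alpha}_t$, and show this function is strictly concave on $(0,1]$. Strict concavity gives at most one maximizer, and continuity on a domain whose left end is harmless gives existence.

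\textbf{Step 1: closed form.} Both $\hat{p}_t$ and $p_t$ are Gaussians with isotropic covariances. For Gaussians with commuting covariances the 2-Wasserstein distance is
\begin{equation*}
\mathcal{W}^2 = \|\mu_1-\mu_2\|_2^2 + \|\Sigma_1^{1/2}-\Sigma_2^{1/2}\|_F^2 .
\end{equation*}
Here the means differ by $\sqrt{a}\,(\hat{\mathbf{z}}_0-\mathbf{z}_d)$. The variances are $a\hat{\sigma}_0^2+1-a$ and $1-a$, applied entrywise, with $\hat{\sigma}_0$ a per-location map. Writing $\delta^2=\|\hat{\mathbf{z}}_0-\mathbf{z}_d\|_2^2$ (or its calibrated surrogate $\hat{\sigma}_0^2$, as suggested by the NLL loss), we get
\begin{equation*}
\mathcal{W}(a)^2 = a\,\delta^2 + \sum_i\Bigl(\sqrt{1-a+a\hat{\sigma}_{0,i}^2}-\sqrt{1-a}\Bigr)^2 .
\end{equation*}
We assume $\hat{\sigma}_0\neq 0$ or $\delta\neq0$; otherwise $\mathcal{W}\equiv 0$ and the maximizer is trivially $a=1$.

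\textbf{Step 2: reduce to concavity.} On $(0,1]$ we have $\mathcal{H}>0$, so we maximize
\begin{equation*}
g(a)=\log\mathcal{H}=\tfrac12\log a-\lambda\,\mathcal{W}(a).
\end{equation*}
The term $\tfrac12\log a$ is strictly concave and tends to $-\infty$ as $a\to0^+$. It therefore suffices to show that $\mathcal{W}(a)$ is convex in $a$.

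\textbf{Step 3 (main obstacle): convexity of $\mathcal{W}$.} The distance $\mathcal{W}$ is the Euclidean norm of the vector with entries $\sqrt{a}\,\delta$ and $\sqrt{1-a+a s_i}-\sqrt{1-a}$, where $s_i=\hat{\sigma}_{0,i}^2$. A norm of nonnegative convex components is convex, but $\sqrt{a}$ is concave, so this argument does not apply directly. The first thing to try is the explicit one-dimensional computation:
\begin{equation*}
\sqrt{1-a+as}-\sqrt{1-a}=\frac{as}{\sqrt{1-a+as}+\sqrt{1-a}} .
\end{equation*}
This expression is convex and increasing in $a$ when $s\le 1$, which is the regime of a good calibration stage. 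For $s>1$ one checks the second derivative directly.

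If plain convexity fails, the fallback is to show instead that $g'(a)=\frac{1}{2a}-\lambda\mathcal{W}'(a)$ is strictly decreasing. An alternative fallback is to show that $g'$ has a single sign change. For this we would use that $a\,\mathcal{W}'(a)$ is nondecreasing, which follows from the product of monotone factors in the formula above.

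\textbf{Step 4: conclusion.} Since $g(a)\to-\infty$ as $a\to0^+$ and $g$ is continuous on $(0,1]$, a maximizer exists. It may lie at the boundary $a=1$ when $\lambda\mathcal{W}'(1)\le\frac12$; note that $\mathcal{W}'$ may blow up at $a=1$ because of $\sqrt{1-a}$, which favors an interior maximum. Strict concavity of $g$, or the single sign change of $g'$, gives uniqueness. Because $\log$ is strictly increasing, this maximizer is also the unique global maximizer of $\mathcal{H}$.
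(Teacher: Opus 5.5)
Your main line cannot be completed, because the statement Step~3 aims at is false. Whenever $\delta>0$, the component $\sqrt{a}\,\delta$ dominates near $a=0$. Since $f_s(a)=\tfrac{s}{2}a+O(a^2)$, you get $\mathcal{W}(a)=\sqrt{a}\,\sqrt{\delta^2+O(a)}$, so $\mathcal{W}''(a)\to-\infty$ as $a\to0^+$ and $\mathcal{W}$ is not convex on $(0,1]$. Nor is $g$ concave in $a$. Take $\hat{\sigma}_0\equiv0$, where your formula and the paper's both reduce to $\mathcal{W}=\delta\sqrt{a}$. Then $g''(a)=-\frac{1}{2a^2}+\frac{\lambda\delta}{4}a^{-3/2}$, which is positive on $\bigl(4/(\lambda\delta)^2,1\bigr]$ as soon as $\lambda\delta>2$. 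By continuity, $g''$ still takes positive values for small $\hat{\sigma}_0>0$. Your first fallback (``$g'$ strictly decreasing'') is just strict concavity again and fails for the same reason. The componentwise claim also breaks for large variances, since $f_s''(0)=\tfrac14\bigl(1-(s-1)^2\bigr)<0$ for $s>2$.

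Your second fallback, a single sign change of $g'(a)=\frac{1}{a}\bigl(\frac12-\lambda a\,\mathcal{W}'(a)\bigr)$, is the right idea and is what the paper's sign analysis amounts to. But you give no valid argument that $a\,\mathcal{W}'(a)$ increases. The factorization of $f_s$ you cite does not deliver it: for $s>2$ the factor $1/(\sqrt{1-a+as}+\sqrt{1-a})$ is not monotone. It also says nothing about how the property passes from the components to the Euclidean norm $\mathcal{W}$. Finally, you need \emph{strict} monotonicity, since an interval on which $\lambda a\,\mathcal{W}'(a)=\frac12$ would produce a whole interval of maximizers.

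The missing idea is to work in $u=\sqrt{\bar{\alpha}_t}$ rather than in $\bar{\alpha}_t$. That is what makes the paper's proof a one-line computation. It takes $\mathcal{W}=\omega\sqrt{\bar{\alpha}_t}=\omega u$, so $\mathcal{H}=u\,e^{-\lambda\omega u}$ and $d\mathcal{H}/d\bar{\alpha}_t\propto 1-\lambda\omega\sqrt{\bar{\alpha}_t}$ changes sign once, at $\bar{\alpha}_t^*=(\lambda\omega)^{-2}$. When $\lambda\omega\le1$ the maximizer is the endpoint $\bar{\alpha}_t=1$, which the paper leaves implicit. That closed form is really the cost of the coupling that shares the diffusion noise, hence an upper bound on $W_2$. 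Your Step~1 expression is the exact distance, and it too is tractable in $u$. With $a=u^2$ and $k=s-1\ge-1$, a direct computation gives $\frac{d^2}{du^2}f_s(u^2)=\frac{k}{(1+ka)^{3/2}}+\frac{1}{(1-a)^{3/2}}\ge0$; for $k<0$ use $|k|\le1$ and $1-|k|a\ge1-a$. So every component $\delta u$, $f_{s_i}(u^2)$ of $\mathcal{W}$ is nonnegative and convex in $u$, and hence so is their Euclidean norm. Then $\log\mathcal{H}=\log u-\lambda\mathcal{W}$ is strictly concave on $(0,1]$ (strictness comes from $\log u$) and tends to $-\infty$ as $u\to0^+$. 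This gives existence and uniqueness of the maximizer. Equivalently, $\sqrt{a}\,\mathcal{W}'(a)=\tfrac12\,d\mathcal{W}/du$ is nondecreasing, so $a\,\mathcal{W}'(a)=\sqrt{a}\cdot\sqrt{a}\,\mathcal{W}'(a)$ is strictly increasing whenever $\mathcal{W}\not\equiv0$. That is precisely the justification your sign-change fallback lacked.
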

\begin{proof}
	For isotropic Gaussian distributions with same dimension, the 2-Wasserstein distance is exactly
	\begin{equation}
		\mathcal{W}(\hat{p}_t, p_t) = \sqrt{\bar{\alpha}_t (\|\hat{\mathbf{z}}_0 - \mathbf{z}_d\|_2^2 + \hat{\sigma}_0^2)}.
	\end{equation}
	Here, we set $\omega=\sqrt{\|\hat{\mathbf{z}}_0 - \mathbf{z}_d\|_2^2 + \hat{\sigma}_0^2}$ and omit the subscript for simplicity. Substituting above formulation gives
	\begin{equation}
		\mathcal{H}(\bar{\alpha}_t) = \sqrt{\bar{\alpha}_t} \cdot \exp\left( -\lambda \omega\, \sqrt{\bar{\alpha}_t} \right).
	\end{equation}
	The derivative of $\mathcal{H}(\bar{\alpha}_t)$ is
	\begin{equation}
		\frac{d\mathcal{H}}{d\bar{\alpha}_t} = \frac{\exp(-\lambda \omega\, \sqrt{\bar{\alpha}_t})}{2\sqrt{\bar{\alpha}_t}} \left( 1 - \lambda \omega\, \sqrt{\bar{\alpha}_t} \right).
	\end{equation}
	Setting the derivative to zero yields the critical point
	\begin{equation}
		\bar{\alpha}_t^* = \frac{1}{(\lambda \omega)^2}.
	\end{equation}
	For $\bar{\alpha}_t < \bar{\alpha}_t^*$, the derivative is positive; for $\bar{\alpha}_t > \bar{\alpha}_t^*$, it is negative. Hence $\mathcal{H}(\bar{\alpha}_t)$ has a unique global maximum.
\end{proof}
This proposition confirms the existence of an optimal $\bar{\alpha}_t$ (and thus an optimal timestep $t$) that best balances input fidelity and alignment to the target distribution. We further illustrate this in Fig. \ref{fig:t_demo}. During inference, since the ground-truth $\mathbf{z}_d$ is unavailable, we approximate $\omega$ using only the predicted uncertainty:
\begin{equation}
	\omega \approx \bar{\sigma}_0 = \frac{1}{N} \sum_{i=1}^N \hat{\sigma}_0^{(i)},
\end{equation}
where $\bar{\sigma}_0$ is the spatial average of the predicted standard deviations. We then select the smallest timestep $\hat{t}$ such that $\sqrt{\bar{\alpha}_{\hat{t}}}\,\bar{\sigma}_0 \le \tau$ (or equivalently $\bar{\alpha}_{\hat{t}}\le (\tau / \bar{\sigma}_0)^2$). To avoid overly aggressive decay of $\bar{\alpha}_{\hat{t}}$, we adopt a simplified rule:
\begin{equation}
	\bar{\alpha}_{\hat{t}} = \frac{\tau}{\bar{\sigma}_0},
	\label{eq:alpha_hat}
\end{equation}
\begin{table*}[t]
	\renewcommand\arraystretch{1.1}
	\setlength{\tabcolsep}{3.4pt}
	\caption{
		Quantitative zero-shot comparison on real-world benchmarks. Input resolutions for the low-resolution depth maps are provided in the second row. For each metric, the best and second-best results are highlighted in \textbf{bold} and \uline{underline}, respectively.
%		Zero-shot comparison on real-world benchmarks. The resolution of input low-resolution depth map is listed at the second row. The best and second results are masked with \textbf{bold face} and \uline{underline}, respectively.
		}
	\label{tab:real_world}
	\centering
	\resizebox{\linewidth}{!}{
		\begin{tabular}{@{}lccccccccccccc@{}}
			\toprule
			\multirow{3}{*}{Method} & \multirow{3}{*}{Ref.} & \multicolumn{6}{c}{RGB-D-D}                                                                                                      & \multicolumn{6}{c}{TOFDSR}                                                                                                              \\ \cmidrule(l){3-8} \cmidrule(l){9-14} 
			&                       & \multicolumn{2}{c}{$192\times 144$}         & \multicolumn{2}{c}{$64\times 48$}    & \multicolumn{2}{c}{$32\times 24$}           & \multicolumn{2}{c}{$192\times 144$}         & \multicolumn{2}{c}{$64\times 48$}           & \multicolumn{2}{c}{$32\times 24$}           \\ \cmidrule(l){3-4} \cmidrule(l){5-6} \cmidrule(l){7-8} \cmidrule(l){9-10} \cmidrule(l){11-12} \cmidrule(l){13-14}    
			&                       & RMSE $\downarrow$    & MAE $\downarrow$     & RMSE $\downarrow$ & MAE $\downarrow$ & RMSE $\downarrow$    & MAE $\downarrow$     & RMSE $\downarrow$    & MAE $\downarrow$     & RMSE $\downarrow$    & MAE $\downarrow$     & RMSE $\downarrow$    & MAE $\downarrow$     \\ \midrule
			DA v2-L             & NeurIPS-24          & 0.1082               & 0.0745               & 0.1082               & 0.0745           & 0.1082               & 0.0745               & 0.1035               & 0.0690               & 0.1035               & 0.0690               & 0.1035               & 0.0690               \\
			MG-LCM                  & CVPR-24             & 0.1306               & 0.1007               & 0.1306            & 0.1007           & 0.1306               & 0.1007               & 0.1219               & 0.0835               & 0.1219               & 0.0835               & 0.1219               & 0.0835               \\ \midrule
			SGNet                   & AAAI-24             & 0.0726               & 0.0388               & 0.0723            & 0.0391           & 0.0740               & 0.0416               & 0.0726               & 0.0400               & 0.0726               & 0.0404               & 0.0767               & 0.0433               \\
			C2PD                    & AAAI-25             & 0.0724               & 0.0388               & 0.0717            & 0.0384           & 0.0714               & 0.0384               & {\ul 0.0723}         & 0.0403               & {\ul 0.0718}         & {\ul 0.0400}         & {\ul 0.0722}         & {\ul 0.0405}         \\
			DuCos                   & ICCV-25             & 0.0744               & 0.0388               & 0.0741            & 0.0385           & 0.0898               & 0.0555               & 0.0742               & {\ul 0.0399}         & 0.0742               & 0.0399               & 0.0958               & 0.0594               \\
			PromptDA                & CVPR-25             & 0.0706               & 0.0506               & 0.0705            & 0.0505           & 0.0704               & 0.0503               & 0.0879               & 0.0623               & 0.0885               & 0.0624               & 0.0884               & 0.0620               \\
			OMNI-DC                 & ICCV-25             & 0.0680               & {\ul 0.0371}         & 0.0687            & 0.0385           & 0.0846               & 0.0468               & 0.0748               & {\ul 0.0399}         & 0.0890               & 0.0451               & 0.1071               & 0.0535               \\
			MG-DC                   & ICCV-25             & 0.0685            & 0.0382           & 0.0684            & 0.0392           & 0.0826            & 0.0475           & 0.0780            & 0.0416           & 0.0844            & 0.0441           & 0.0980            & 0.0508           \\
			PriorDA                 & arXiv-25            & {\ul 0.0655}         & 0.0379               & {\ul 0.0613}      & {\ul 0.0377}     & {\ul 0.0691}         & {\ul 0.0432}         & 0.0731               & 0.0411               & 0.0755               & 0.0435               & 0.0833               & 0.0480               \\ \midrule
			\textbf{Our}            & ---                   & \textbf{0.0607}      & \textbf{0.0332}      & \textbf{0.0597}   & \textbf{0.0330}  & \textbf{0.0597}      & \textbf{0.0342}      & \textbf{0.0693}      & \textbf{0.0336}      & \textbf{0.0688}      & \textbf{0.0346}      & \textbf{0.0700}      & \textbf{0.0363}      \\ \bottomrule
		\end{tabular}
	}
	\vskip -0.1in
\end{table*}
which is subsequently clamped to the valid interval $[\bar{\alpha}_{min}, 1]$ to ensure $\bar{\alpha}_{\hat{t}}\in (0,1]$. Given $\hat{t}$, a lightweight noise sampling network $\mathcal{S}(\cdot)$ takes as input the RGB latent $\mathbf{z}_x$, the predicted mean scale $\hat{\mathbf{z}}_t=\sqrt{\bar{\alpha}_{\hat{t}}}\,\hat{\mathbf{z}}_0$, and the predicted noise scale $\hat{\sigma}_t=\sqrt{\bar{\alpha}_{\hat{t}}\,\hat{\sigma}^2_0+1-\bar{\alpha}_{\hat{t}}}$, and outputs a noise prediction $\hat{\epsilon}=\mathcal{S}(\mathbf{z}_x,\hat{\mathbf{z}}_t,\hat{\sigma}_t)$. The intermediate noisy latent is formulated as
\begin{equation}
	\hat{\epsilon}_t=\hat{\mathbf{z}}_t+\hat{\sigma}_t\,\hat{\epsilon}.
\end{equation}
Finally, $\hat{\epsilon}_t$ is fed into the pre-trained Marigold-LCM model for one-step denoising, yielding the refined depth latent $\hat{\mathbf{z}}_d$, which is decoded to the final high-resolution depth map $\hat{\mathbf{d}} = \mathcal{D}_l(\hat{\mathbf{z}}_d)$. To train the noise sampling network $\mathcal{S}(\cdot)$, we employ a reconstruction loss that encourages accurate recovery of both the latent and pixel-space depth:
\begin{equation}
	\mathcal{L}_{\text{rec}} = \|\hat{\mathbf{z}}_d - \mathbf{z}_d\|_2^2 + \|\hat{\mathbf{d}} - \mathbf{d}\|_1.
\end{equation}
Additionally, we incorporate a gradient loss to encourage spatial smoothness and edge preservation in the predicted depth map:
\begin{equation}
	\mathcal{L}_g=\Vert\partial_x\hat{\mathbf{d}}-\partial_x\mathbf{d}\Vert_1+\Vert\partial_y\hat{\mathbf{d}}-\partial_y\mathbf{d}\Vert_1,
\end{equation}
where $\partial_x$ and $\partial_y$ denote the horizontal and vertical spatial gradients, respectively.

\subsection{Implementation Details}
\textbf{Dataset Details.}\ \ 
We train AdaDS on the \textbf{Hypersim} dataset \cite{hypersim}, which provides high-quality synthetic RGB-depth pairs collected under realistic indoor conditions. To ensure stability, we filter out samples containing large invalid depth regions, resulting in a final training set of around 56K samples. For each sample, the ground-truth depth map is downsampled using random scaling factors to generate the LR input. To mimic real-world sensor degradations, we further apply a series of augmentations to the LR depth maps, including Gaussian noise, random filtering and sparse point removal. For the sparsity simulation, we randomly discard pixels and fill the missing regions using a KNN-based inpainting approach, approximating the sparse and irregular measurements typical of real-world range sensors.

%We train our framework using Hypersim \cite{hypersim} dataset, due to its high-quality synthetic depth maps. We filter out samples with large invalid depth regions, and the final training set contains around 56K samples. We downsample ground truth depth maps by random scales to construct the LR input. Meanwhile, to simulate the real-world degradation patterns, we perform various degradation operations to the downsampled depth maps, including random noise, random image filtering and random point replacement. For the last operation, we randomly remove pixels in the LR input, and complete them using a KNN-based approache, simulating the sparse measurement of real-world range sensors.

\textbf{Network Details.}\ \ 
The calibration stage employs the ViT-Small variant of Depth Anything V2 as its image backbone for strong spatial perception. We further adapt the window size of its patch embedding layer to 16 for compatible feature size with LR input. Both feature extractor and noise sampling module use lightweight UNet-like architecture built from ConvNeXt \cite{convnext} blocks and transposed convolutions. Overall, AdaDS contains approximately 40.3M trainable parameters.
%
%We use the ViT-small version of Depth Anything V2 as the backbone of the coarse refinement stage. For feature extractor and noise sampling module, the ConvNeXt \cite{convnext} blocks and transposed convolutions are used as basis components for network construction. The total trainable parameters of AdaDS are around 45M.

\textbf{Training Details.}\ \ 
We train the two stages of AdaDS independently. The calibration stage is trained for 42K iterations, and the sampling stage for 35K iterations, both with a batch size of 8. Training is performed on 4 NVIDIA RTX 4090 GPUs and takes approximately 14 hours in total. We adopt the OneCycle learning rate policy \cite{onecycle} with an initial learning rate of $10^{-5}$, which progressively increases to a peak of $10^{-4}$ over the first 10\% of iterations and then decays to $10^{-5}$ following a cosine schedule. The training objectives for the first and second stages are defined as $\mathcal{L}_1 = \mathcal{L}_{\text{nll}} + \beta \mathcal{L}_d$ and $\mathcal{L}_2 = \mathcal{L}_{\text{rec}} + \mathcal{L}_g$, respectively. The balancing hyperparameters are set to $\beta = 0.5$.
%
%The training objectives are defined as follows:  
%- Coarse refinement stage: $\mathcal{L}_1 = \mathcal{L}_{\text{nll}} + \alpha \mathcal{L}_d$  
%- Noise sampling stage: $\mathcal{L}_2 = \mathcal{L}_{\text{rec}} + \beta \mathcal{L}_{\text{reg}}$  
%
%The balancing hyperparameters are set to $\alpha = 0.5$ and $\beta = 0.4$.

%We train the two stages of AdaDS separately. The first and second stages are trained for 42K and 35K iterations, respectively, with batch size setting to 8. The two-stage training costs 14 hours on 4 NVIDIA RTX 4090 GPUs. We employ OneCycle \cite{onecycle} learning rate, with a initial learning rate of $10^{-5}$ increases to $10^{-4}$ during the first $10\%$ iterations, and then decreases to $10^{-5}$ with cosine annealing. The training loss for the first and second stages are $\mathcal{L}_{1}=\mathcal{L}_{nll}+\alpha\mathcal{L}_{d}$ and $\mathcal{L}_{2}=\mathcal{L}_{rec}+\beta\mathcal{L}_{reg}$, respectively, with two superparameters $\alpha$ and $\beta$ setting to 0.5 and 0.4, respectively.

\section{Experiment}

\begin{table*}
	\renewcommand\arraystretch{1.1}
	\setlength{\tabcolsep}{3.2pt}
	\caption{Quantitative zero-shot comparison on synthetic benchmarks. DS., GN., GB., SM. and LC. denote downsampling, Gaussian noise, Gaussian blur, sparse measurement and low-bit compression, respectively. RMSE metric is reported in this table.}
	\label{tab:synthetic}
	\centering
	\resizebox{\linewidth}{!}{
		\begin{tabular}{@{}lccccccccccc@{}}
			\toprule
			\multirow{2}{*}{Method} & \multirow{2}{*}{Ref.} & \multicolumn{5}{c}{ScanNet}                                                             & \multicolumn{5}{c}{NYUv2}                                                               \\ 
			\cmidrule(l){3-7} \cmidrule(l){8-12} 
			&                       & $16\times$ DS.           & DS. + GN.     & DS. + GB.     & DS. + SM.     & DS. + LC.     & $16\times$ DS.           & DS. + GN.     & DS. + GB.     & DS. + SM.     & DS. + LC.     \\ \midrule
			DA v2-L                   & NeurIPS-24            & 0.1416          & 0.1416          & 0.1416          & 0.1416          & 0.1416          & 0.2075         & 0.2075          & 0.2075          & 0.2075          & 0.2075          \\
			MG-LCM                  & CVPR-24               & 0.1986          & 0.1986          & 0.1986          & 0.1986          & 0.1986          & 0.2587          & 0.2587          & 0.2587          & 0.2587          & 0.2587          \\ \midrule
			PromptDA                & CVPR-25               & {\ul 0.0927}    & 0.0983          & {\ul 0.0935}    & {\ul 0.0953}    & 0.0950          & {\ul 0.1326}    & {\ul 0.1352}    & {\ul 0.1349}    & {\ul 0.1369}    & {\ul 0.1338}    \\
			OMNI-DC                 & ICCV-25               & 0.1173          & 0.1224          & 0.1127          & 0.1188          & 0.1206          & 0.1966          & 0.2011          & 0.1894          & 0.2003          & 0.1980          \\
			MG-DC             & ICCV-25               & 0.1222          & 0.1288          & 0.1173          & 0.1214          & 0.1262          & 0.2027          & 0.2077          & 0.1951          & 0.2025          & 0.2052          \\PriorDA                 & arXiv-25              & 0.0933          & {\ul 0.0949}    & 0.0953          & 0.0954          & {\ul 0.0944}    & 0.1432          & 0.1444          & 0.1474          & 0.1458          & 0.1439          \\ \midrule
			\textbf{Our}            & ---                   & \textbf{0.0754} & \textbf{0.0783} & \textbf{0.0810} & \textbf{0.0834} & \textbf{0.0773} & \textbf{0.1197} & \textbf{0.1217} & \textbf{0.1289} & \textbf{0.1315} & \textbf{0.1206} \\ \bottomrule
		\end{tabular}
	}
 	% \vskip -0.05in
\end{table*}

\subsection{Experiment Setup}

To assess the zero-shot generalization and robustness of AdaDS, we evaluate on both real-world and synthetic benchmarks. For \textit{real-world} evaluation, we use two datasets: \textbf{RGB-D-D} \cite{rgbdd} and \textbf{TOFDSR} \cite{tofdsr}. Both contain LR depth maps captured by a mobile ToF camera at $192 \times 144$ resolution, paired with high-precision ground-truth depth maps acquired by a high-end ToF sensor at $512 \times 384$ resolution. During testing, we apply bicubic downsampling to the original LR inputs to simulate varying upsampling factors. For \textit{synthetic} evaluation, we adopt \textbf{NYUv2} \cite{nyu} and \textbf{ScanNet} \cite{scannet}, which provide high-quality RGB-depth pairs from indoor scenes. To simulate realistic degradations, we synthesize LR inputs by first applying bicubic downsampling to the ground-truth depth maps, followed by a combination of common real-world artifacts, including additive Gaussian noise, Gaussian blur, pixel removal, and low-bit compression. Invalid pixels in the ground-truth depth maps are filled using a colorization-based inpainting method \cite{colorization} prior to degradation simulation. All quantitative metrics are computed exclusively on valid ground-truth pixels to prevent bias from missing depth values. We report three standard depth estimation metrics, including Root Mean Squared Error (RMSE) and Mean Absolute Error (MAE) for absolute accuracy, and threshold accuracy $\delta_{1.05}$ for relative precision. Please refer to the appendix for more details.

%We employ two real-world datasets, RGB-D-D \cite{rgbdd} and TOFDSR \cite{tofdsr}, to evaluate the zero-shot DSR performance on real-world scenes. Both datasets contains LR depth maps collected with a mobile ToF camera at $192\times 144$ resolution, and ground truth depth maps collected by a high-precision ToF camera at $512\times 384$ resolution. In evaluation, we downsample the LR depth with bicubic interpolation to test model performance at various input scales. Besides, we employ two synthetic datasets, NYUv2 \cite{nyu} and ScanNet \cite{scannet}, and simulate various real-world patterns, including Gaussian noise, Gaussian blur, sparse measurement and compression, to access the robustness to various degradation patterns. The LR depth for these two datasets are synthesized by downsampling ground truth depth with Bicubic interpolation. We use a colorization \cite{colorization} approach to fill the invalid pixels in ground truth before downsampling, and the metric results are only computed on valid pixels of ground truth to avoid introduction of inaccurate depth values. We employ two error metrics, including Root Mean Squared Error (RMSE) and Mean Absolute Error (MAE), and a threshold metric $\sigma_i$ to evaluate the depth prediction performance. 
%Please see the supplementary material for more details.

\subsection{Comparison with the State-of-the-Art}
We compare AdaDS against a range of state-of-the-art (SOTA) methods, including Depth Anything V2 (DA v2) \cite{dav2}, Marigold-LCM (MG-LCM) \cite{marigold}, SGNet \cite{sgnet}, C2PD \cite{c2pd}, DuCos \cite{ducos}, PromptDA \cite{promptda}, OMNI-DC \cite{omni_dc}, Marigold-DC (MG-DC) \cite{marigold_dc} and PriorDA \cite{priorda}.

\textbf{Real-World Evaluation.}\ \ 
Quantitative results on the real-world benchmarks are reported in Tab.~\ref{tab:real_world}. AdaDS consistently achieves substantial improvements over all competing methods across every metric and upsampling factor, demonstrating its superior performance and robustness on real-world DSR tasks. For instance, on the original input resolution of the RGB-D-D dataset, AdaDS outperforms the second-best method, PriorDA by 7.33\% in RMSE. On the TOFDSR dataset, it surpasses C2PD by 4.15\% in RMSE metric. Notably, AdaDS exhibits remarkable stability with respect to input resolution changes, maintaining nearly consistent performance across all tested upsampling factors. In contrast, competing methods show clear brittleness. For example, the RMSE of MG-DC deteriorates by 20.58\% when the upsampling factor increases from 2.7$\times$ to 16$\times$. We further evaluate generalization to arbitrary (non-integer) upsampling factors and present the results in Fig. \ref{fig:arbitrary}.

%The quantitative results on real-world benchmarks are reported in Tab. \ref{tab:real_world}. Our framework achieves significant improvement over all competitors on all metrics and all upsampling factors, demonstrating its superior performance on real-world DSR task. For example, our method outperform the second method, PriorDA, by 7.33\% on RMSE metric of RGB-D-D dataset at original input resolution, and improve C2PD by 4.15\% on TOFDSR. Notably, our method is robust to input resolution changing, with similar performance on all three upsampling factor configurations. In contrast, the competitors are more brittle. For example, the RMSE performance of MG-DC drops by 20.58\% when upsampling factor changing from 2.7 to 16. We further compare with previous approaches on arbitrary upsampling factors, and report the results in Fig. \ref{fig:arbitrary}.
\begin{figure}
	\centering
	\includegraphics[width=\linewidth]{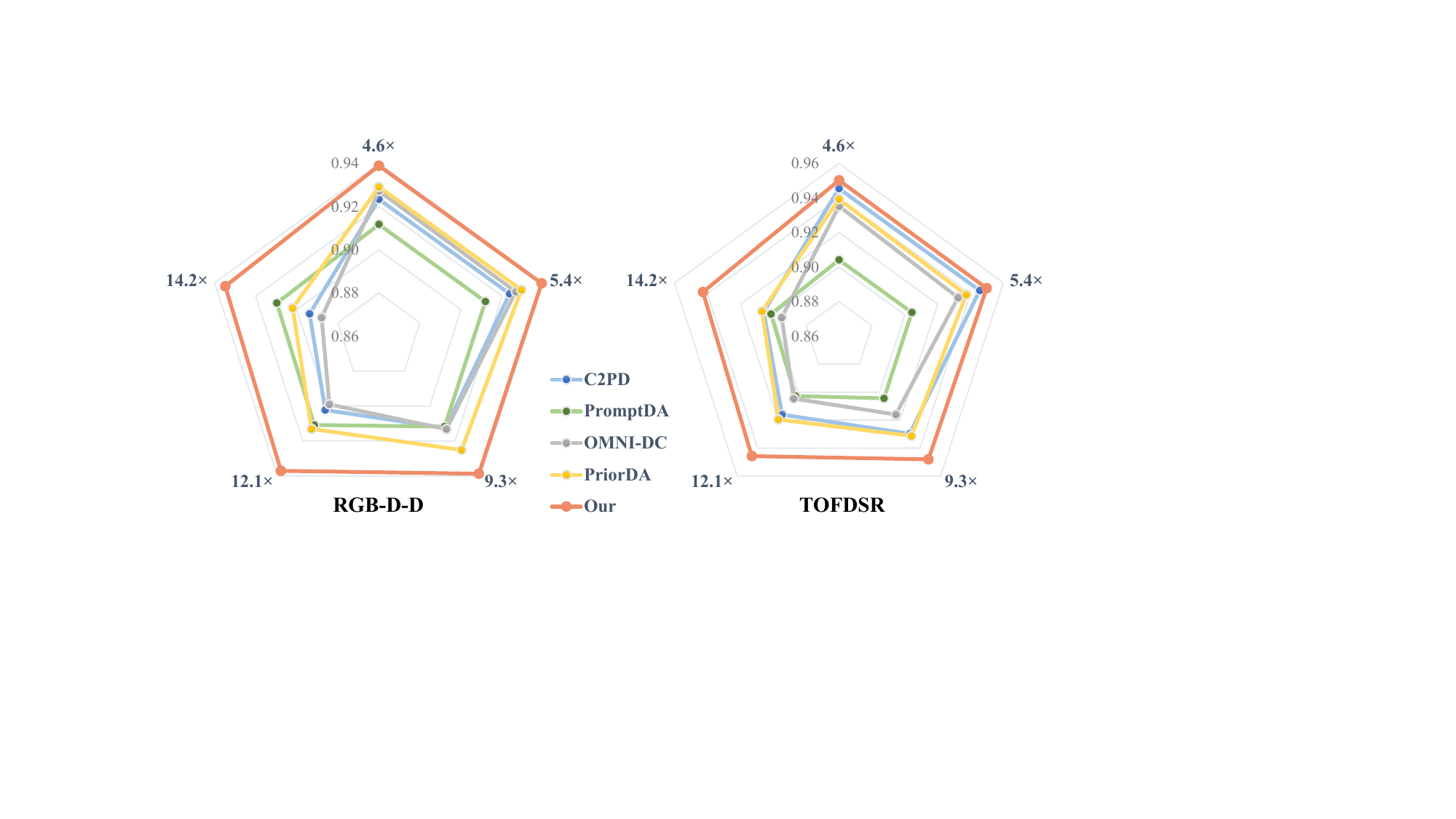}
	\caption{
		Comparison on real-world benchmarks with arbitrary upsampling factors. The $\delta_{1.05}$ $\uparrow$ metric is reported in this figure.
	}
	\label{fig:arbitrary}
	\vskip -0.2in
\end{figure}

\begin{figure*}
	\centering
	\includegraphics[width=\linewidth]{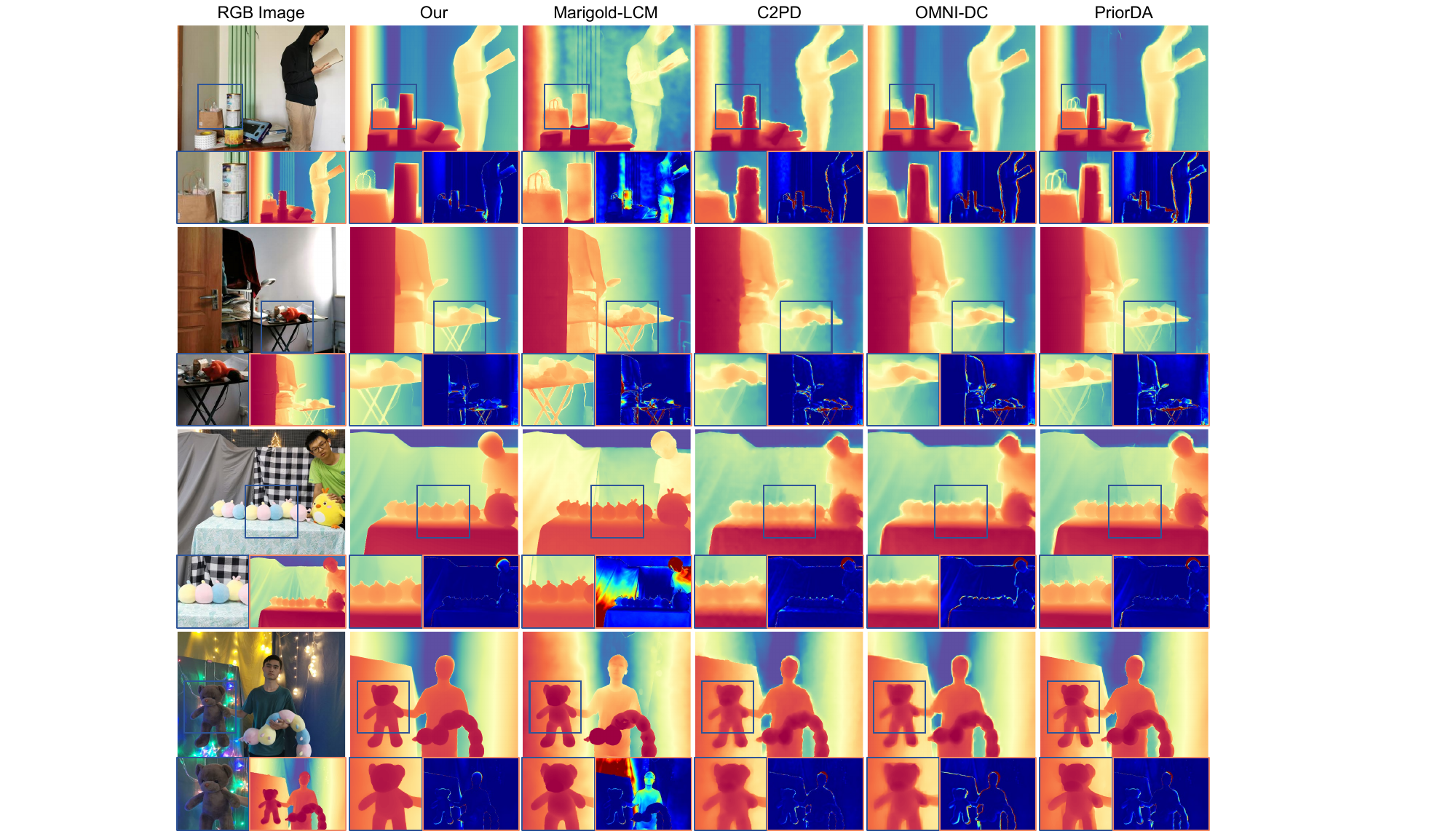}
	\caption{
		Qualitative comparison on real-world benchmarks. The first two rows present samples from the RGB-D-D dataset, while the subsequent two rows showcase examples from TOFDSR. Corresponding error maps are provided in the bottom-right inset of each result, where warmer colors (red) denote higher prediction error.
%		Qualitative comparison on real-world datasets. The first two samples are from RGB-D-D dataset and the last two samples are from TOFDSR dataset. Error maps are shown in the bottom right side of each method, with red indicating larger prediction error.
	}
	\label{fig:real_world}
\end{figure*}

\begin{figure*}
	\centering
	\includegraphics[width=\linewidth]{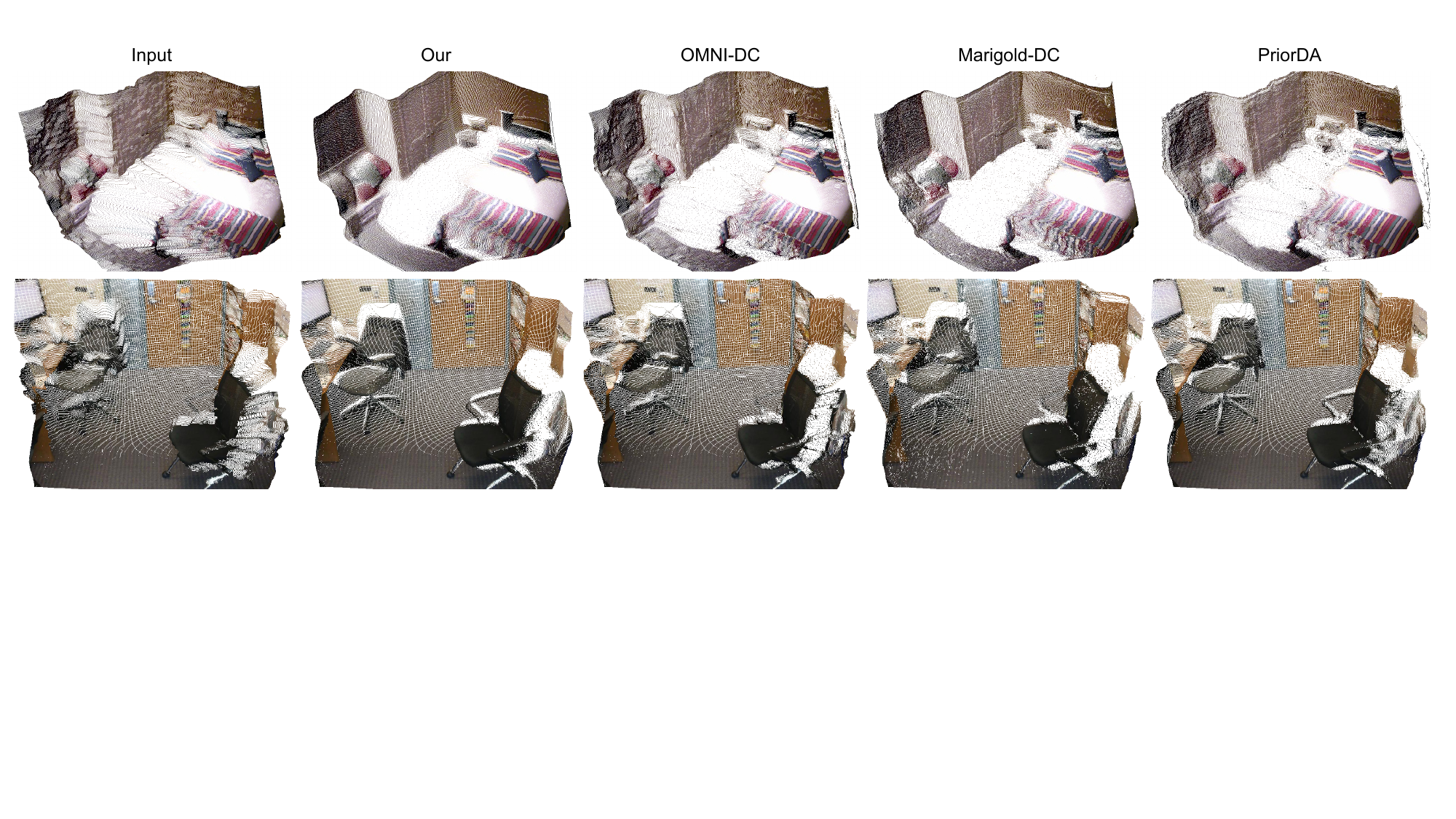}
	\caption{
		Qualitative comparison on synthetic benchmarks. The first and second rows display samples from the NYUv2 and ScanNet datasets, respectively. To facilitate 3D structural evaluation, the estimated depth maps are back-projected into 3D point clouds and colorized using their corresponding RGB images.
%		Qualitative comparison on synthetic datasets. The first and second samples are from NYUv2 and Scannet, respectively. We back-project estimated depth map to pointcloud and use corresponding rgb image for colorization.
		}
	\label{fig:synthetic}
\end{figure*}

\textbf{Synthetic Evaluation.}\ \ 
The quantitative results on synthetic benchmarks are summarized in Tab. \ref{tab:synthetic}. To rigorously assess robustness under severe degradations, we downsample the ground-truth depth maps by a factor of 16 and apply a diverse set of realistic degradation patterns, including Gaussian noise, Gaussian blur, random sparsity, and low-bit compression, mimicking common real-world sensor imperfections. Across all degradation types and both datasets, AdaDS significantly outperforms all baseline methods. On average, it improves over the second-best approach, PromptDA, by 16.91\% on ScanNet and 7.57\% on NYUv2, highlighting its strong generalization and resilience to complex, compound degradations.

%The quantitative results on synthetic benchmarks are reported in Tab. \ref{tab:synthetic}. To access the robustness of our method on sever degradations, we first downsample the ground truth by 16 times, and add various degradation patters to simulate the common degradation cases in real-world. Overall, our method significantly outperforms all competitors on all degradation patterns across two benchmarks, demonstrating superior robustness against various degradation patterns. Specifically, our method achieves an average improvement of 16.91\% and 7.57\% over the second best method, PromptDA, on ScanNet and NYUv2, respectively.

\textbf{Qualitative Comparison.}\ \ 
Qualitative results on real-world benchmarks are presented in Fig.~\ref{fig:real_world}. AdaDS produces depth maps with substantially lower overall prediction error (as visualized in the error maps) while recovering fine-grained geometric details and sharp boundaries in the highlighted regions, outperforming all competitors. On synthetic benchmarks reported in Fig.~\ref{fig:synthetic}, we back-project the predicted depth maps into 3D point clouds to better illustrate structural fidelity. Our method effectively restores severely distorted structures from heavily degraded inputs, preserving both local details and global scene geometry with noticeably higher quality than existing approaches.
%
%The qualitative comparison on real-world benchmarks are shown in Fig. \ref{fig:real_world}. Our method achieves lower overall prediction error as shown in the error map while preserving fine-grained details as illustrated in highlighted regions. The qualitative results on synthetic benchmarks are reported in Fig. \ref{fig:synthetic}. We back-project the estimated depth map to pointcloud to better compare their structural quality. Overall, our method can correctly recover the degraded structure from the input that suffers from strong structural distortion.

\section{Ablation Study}

We perform comprehensive ablation studies on real-world benchmarks, with quantitative results reported in Tab. \ref{tab:ablation}. 
Our calibration stage (b) is built on DA v2-Small backbone (a) and designed to effectively fuse LR depth information with RGB guidance. It delivers substantial gains over (a), reducing RMSE by 50.43\% on RGB-D-D and 44.57\% on TOFDSR. This confirms the critical rule of incorporating LR depth cues. To highlight the superiority of our diffusion sampling strategy over direct regression, we introduce variant (c), which replaces the noise sampling and diffusion operations with direct regression of the final depth latent $\hat{\mathbf{z}}$. It performs substantially worse than the full AdaDS model, with RMSE increases of 2.68\% on RGB-D-D and 3.20\% on TOFDSR. Notably, on TOFDSR it underperforms even the first stage (b), underscoring that direct regression is particularly fragile in zero-shot settings with limited training data. Variant (d) projects the predicted latent from (c) into the forward diffusion process at the same timestep as our full model and applies MG-LCM denoising without any learned noise adaptation. The results are inferior to those of (c), confirming that the pre-trained diffusion model requires carefully tailored noisy intermediates rather than naive forward projection. When our timestep selection rule (Eq. \eqref{eq:alpha_hat}) is replaced with uniformly random timesteps (e), performance drops sharply, validating the importance of the adaptive rule in balancing input content preservation and distributional alignment. Similarly, substituting the learned noise prediction $\hat{\epsilon}$ from the sampling module with standard Gaussian noise (f) results in substantial degradation, demonstrating the effectiveness of the dedicated noise sampling module $\mathcal{S}(\cdot)$ in positioning the intermediate latent within the high-probability region of the target posterior.

\begin{table}
	\renewcommand\arraystretch{1.1}
	\setlength{\tabcolsep}{4.5pt}
	\caption{Ablation study on real-world benchmarks at $8\times$ factor.}
	\label{tab:ablation}
	\centering
	\resizebox{\linewidth}{!}{
		\begin{tabular}{@{}lcccc@{}}
			\toprule
			\multirow{2}{*}{}                                & \multicolumn{2}{c}{RGB-D-D}            & \multicolumn{2}{c}{TOFDSR}           \\ 
			\cmidrule(l){2-3}\cmidrule(l){4-5} 
			& RMSE $\downarrow$ & MAE $\downarrow$ & RMSE $\downarrow$ & MAE $\downarrow$ \\ \midrule
			(a) DA v2-S Baseline                             & 0.1265            & 0.0900           & 0.1252            & 0.0740           \\
			(b) Our Calibration Stage                            & 0.0627            & 0.0358           & {\ul 0.0694}      & 0.0379           \\ \midrule
			(c) (b) + Directly Regressing $\hat{z}$          & {\ul 0.0613}      & {\ul 0.0346}     & 0.0710            & {\ul 0.0379}     \\
			(d) (c) + MG-LCM Denoising                       & 0.0622            & 0.0367           & 0.0723            & 0.0391           \\ \midrule
			(e) Replace $\hat{t}$ with Random $t$       & 0.0768            & 0.0497           & 0.0809            & 0.0455           \\
			(f) Replace $\hat{\epsilon}$ with Gaussian Noise & 0.0668            & 0.0387           & 0.0742            & 0.0394           \\ \midrule
			\textbf{Our}                                     & \textbf{0.0597}   & \textbf{0.0330}  & \textbf{0.0688}   & \textbf{0.0346}  \\ \bottomrule
		\end{tabular}
	}
\end{table}

\begin{figure}
	\centering
	\includegraphics[width=0.98\linewidth]{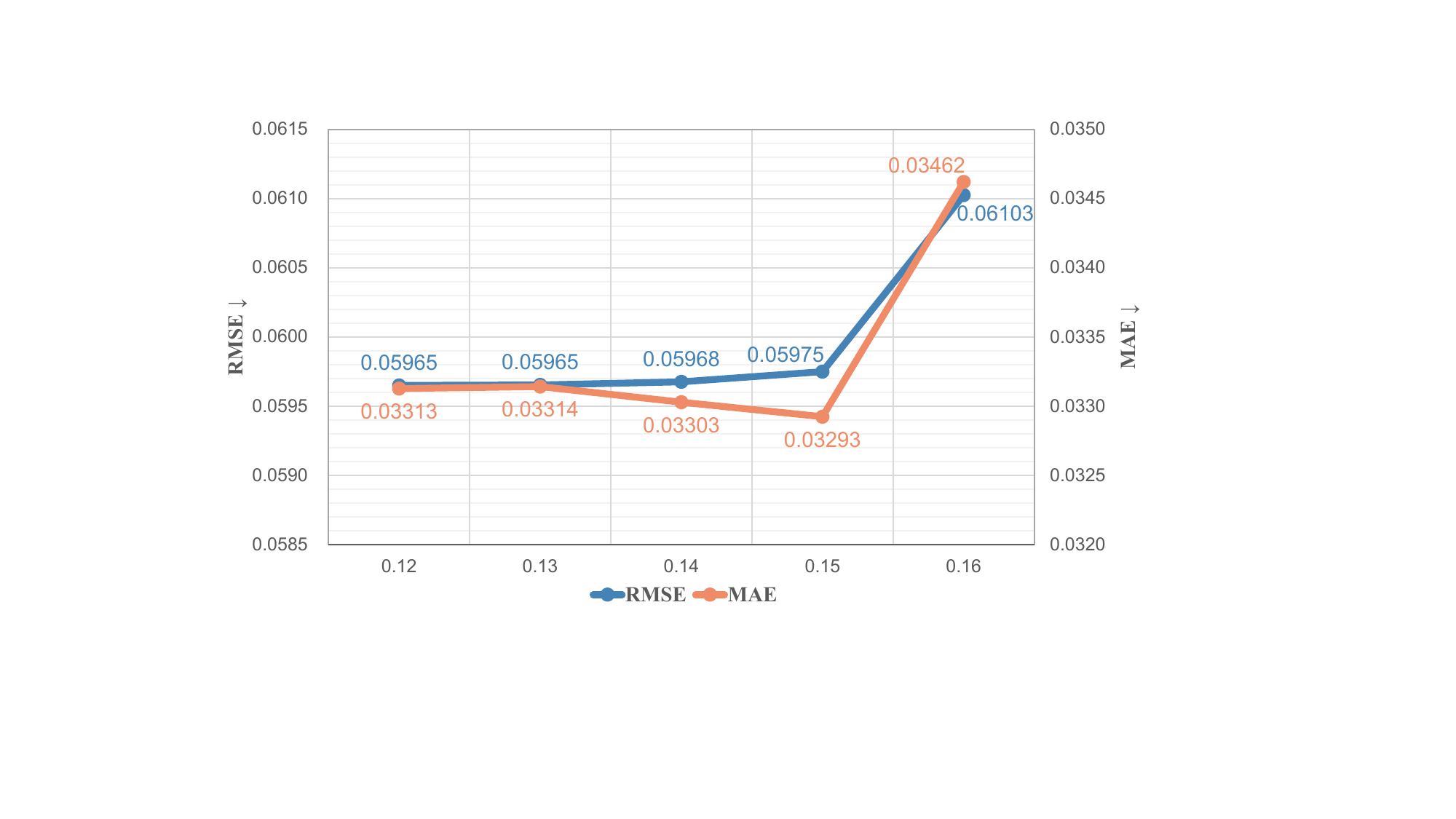}
	\caption{
		Parameter validation experiment on timestep selection parameter $\tau$. The x axis indicates the $\tau$ values. All models are separately trained using the indicated $\tau$.}
	\label{fig:w_thrd}
\end{figure}

\begin{figure}
	\centering
	\includegraphics[width=0.98\linewidth]{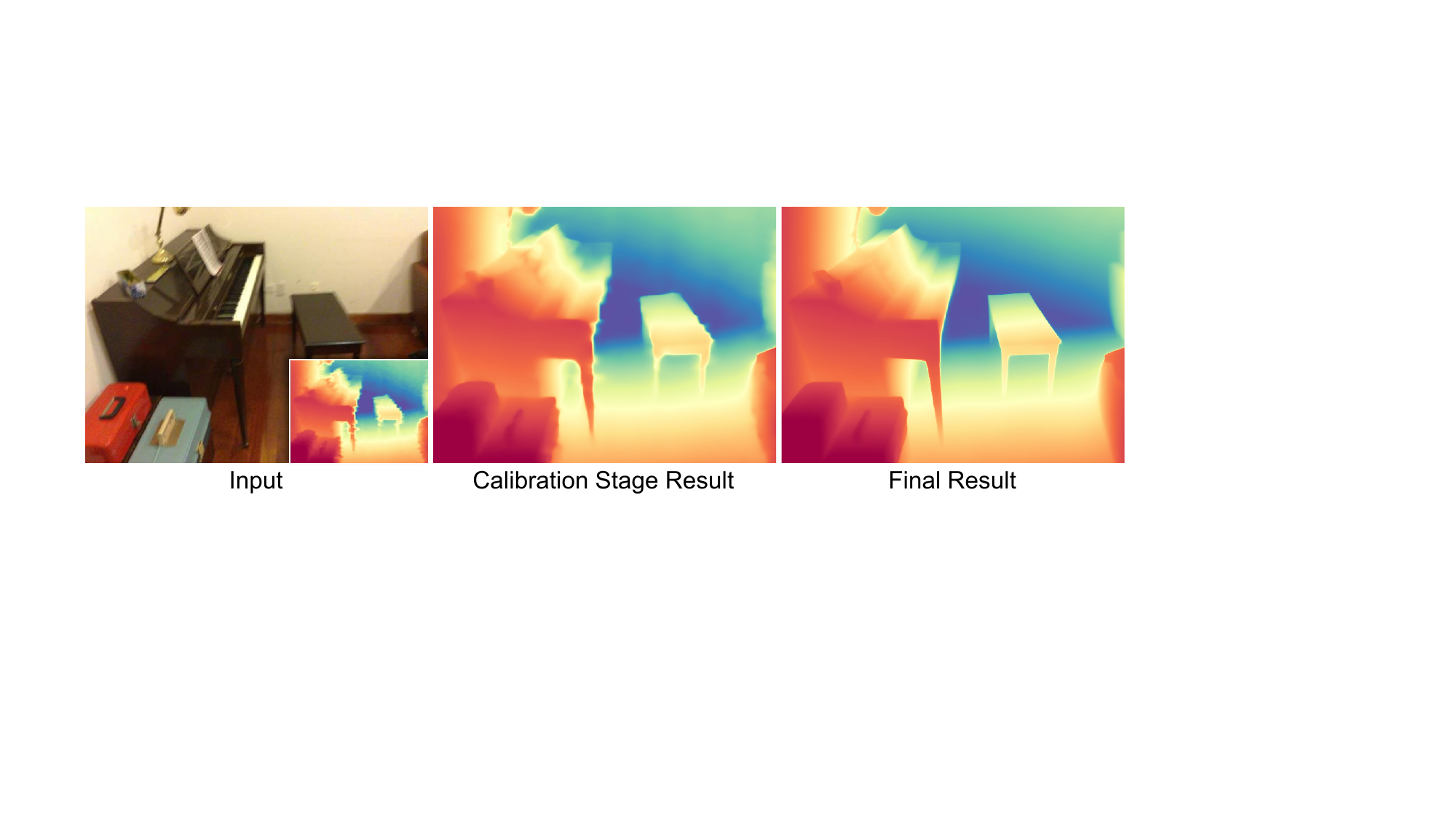}
	\caption{
		Visual comparison between the intermediate result from the calibration stage and the final depth prediction.}
	\label{fig:two_stage}
	\vskip -0.15in
\end{figure}

We further investigate the sensitivity to the threshold $\tau$ in the timestep selection rule. Results are shown in Fig. \ref{fig:w_thrd}. AdaDS remains robust for small values of $\tau$, but performance deteriorates significantly at $\tau = 0.16$. Larger $\tau$ selects very small timesteps that preserve input content but severely limit the model's ability to correct upstream errors. We adopt $\tau = 0.14$ in all experiments, as it provides the best trade-off between RMSE and MAE.

Finally, Fig. \ref{fig:two_stage} visually compares the intermediate output of the calibration stage with the final prediction. When the input depth is heavily degraded, the calibration stage recovers a large portion of the structural content and substantially narrows the distributional gap to high-quality depth, although residual inaccuracies remain. Based on this intermediate estimate, our adaptive noise sampling and diffusion denoising effectively correct these imperfections, producing a high-fidelity final depth map.

\section{Conclusion}
Generalization and robustness remain persistent challenges in DSR tasks, particularly under diverse real-world degradations and unseen scenarios. We address this by exploiting the contraction property in diffusion models, where degraded inputs progressively align with high-quality depth distributions. Building on this insight, we propose AdaDS, a two-stage framework that performs uncertainty-aware calibration followed by adaptive noise injection to position intermediates in the high-probability region of the target posterior learned by a pre-trained diffusion model. Extensive experiments demonstrate AdaDS's superior zero-shot robustness to arbitrary upsampling factors and degradations.

\section*{Impact Statement}
This paper presents work whose goal is to advance the field of Machine Learning. There are many potential societal consequences of our work, none which we feel must be specifically highlighted here.

% In the unusual situation where you want a paper to appear in the
% references without citing it in the main text, use \nocite
% \nocite{langley00}

\bibliography{reference}
\bibliographystyle{icml2026}

%%%%%%%%%%%%%%%%%%%%%%%%%%%%%%%%%%%%%%%%%%%%%%%%%%%%%%%%%%%%%%%%%%%%%%%%%%%%%%%
%%%%%%%%%%%%%%%%%%%%%%%%%%%%%%%%%%%%%%%%%%%%%%%%%%%%%%%%%%%%%%%%%%%%%%%%%%%%%%%
% APPENDIX
%%%%%%%%%%%%%%%%%%%%%%%%%%%%%%%%%%%%%%%%%%%%%%%%%%%%%%%%%%%%%%%%%%%%%%%%%%%%%%%
%%%%%%%%%%%%%%%%%%%%%%%%%%%%%%%%%%%%%%%%%%%%%%%%%%%%%%%%%%%%%%%%%%%%%%%%%%%%%%%
\newpage
\appendix
\onecolumn
\section{Additional Experimental Detail}
\begin{figure*}
	\centering
	\includegraphics[width=\linewidth]{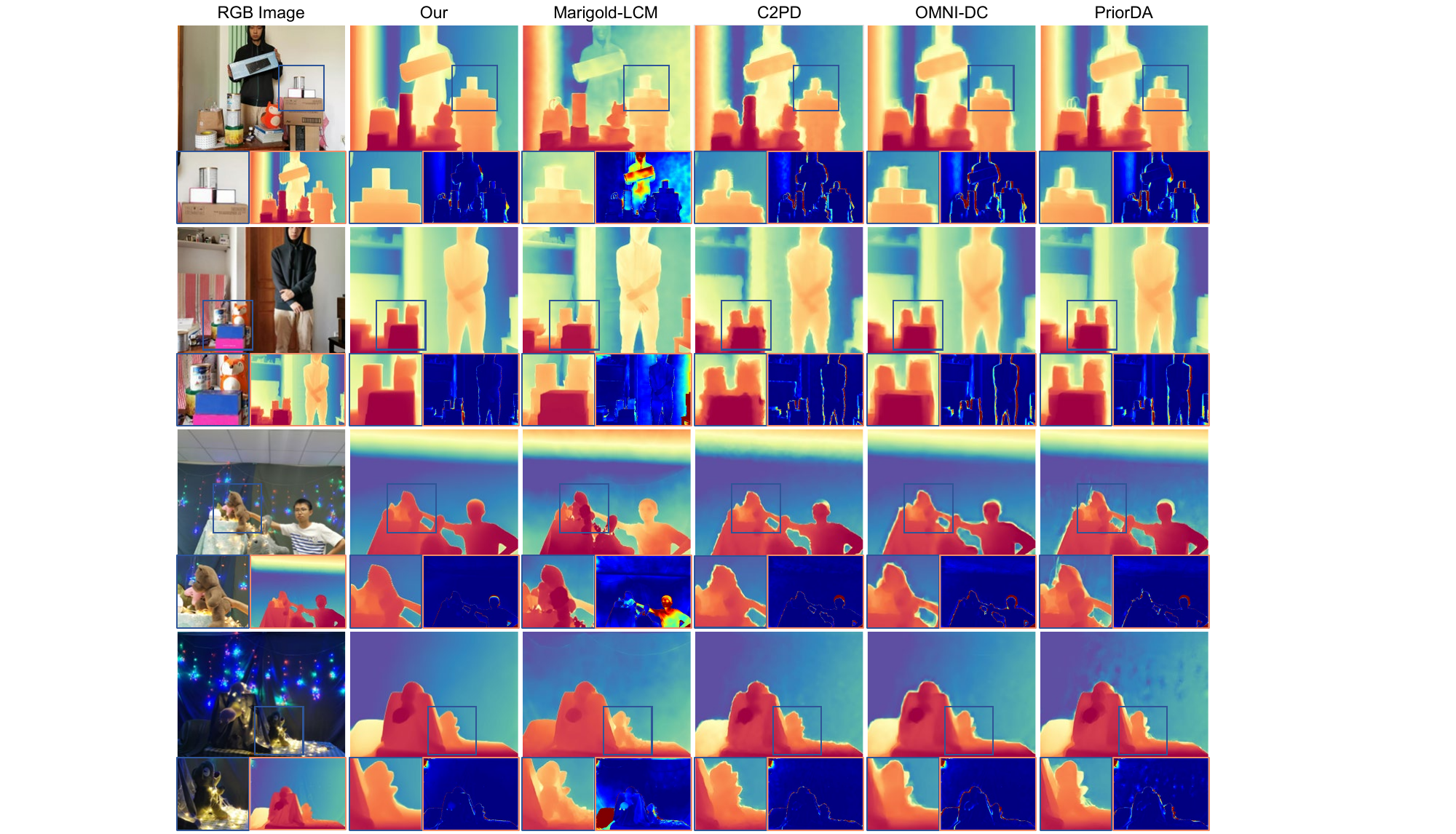}
	\caption{
		Qualitative comparison on real-world benchmarks. The first and last two samples are from RGB-D-D and TOFDSR datasets, respectively. The blue boxes highlight the regions where our method outperforms and the error maps are shown in the right bottom side.
	}
	\label{fig:real_world_sup}
\end{figure*}

For real-world evaluation, we follow the official data splits for the RGB-D-D \cite{rgbdd} and TOFDSR \cite{tofdsr} datasets, which consist of 405 and 560 test samples, respectively. For the NYUv2 dataset, we utilize the standard split for monocular depth estimation, resulting in 654 test samples. Regarding ScanNet, since the official test split contains a vast number of video frames, we perform uniform sampling to select 500 frames for evaluation. Both synthetic datasets are processed at a resolution of . To address invalid depth measurements in the ground truth, we apply the same colorization-based inpainting approach \cite{colorization} used in NYUv2 prior to synthesizing the LR depth maps.

%For real-world benchmarks, we adopt the official data split for both RGB-D-D \cite{rgbdd} and TOFDSR \cite{tofdsr} datasets, containing 405 and 560 test samples, respectively. For NYUv2 dataset, we adopt the official data split for monocular depth estimation, leading to 654 test samples. For ScanNet, as the official test split consists of large amount of video frames, we uniformly select 500 test samples for evaluation. Both synthetic datasets have a image resolution of $640\times 480$. The same colorization-based inpainting approach \cite{colorization} as NYUv2 is adopted to fill the invalid depth measurement in the ground truth before synthesizing the LR depth maps.

To evaluate the performance under realistic degradations, we simulate common real-world degradation patterns within our synthetic benchmarks. First, the ground truth depth maps are downsampled to the target input resolution. Subsequently, we apply several perturbing operations:
\begin{itemize}
	\item \textbf{Gaussian Noise:} Additive noise is sampled from $\mathcal{N}(0, 0.05)$.
	\item \textbf{Gaussian Blur:} Applied using a kernel size of 3 and a standard deviation of 0.5.
	\item \textbf{Pixel Removal:} To simulate the sparse measurements characteristic of range sensors, we randomly remove 30\% of the pixels and fill the resulting gaps using the average value of the three nearest neighbors.
	\item \textbf{Low-Bit Compression:} This is simulated by clipping the floating-point precision to decimeter levels.
\end{itemize}

%We adopt several simulation approaches to simulate the real-world degradation patterns in synthetic benchmarks. We first downsample the ground truth depth map to target input resolution, and apply several perturbing approaches to the downsampled depth map, including Gaussian noise, Gaussian blur, pixel removal and low-bit compression. For Gaussian noise, we sample random Gaussian samples from $\mathcal{N}(0, 0.05)$ as additive noise; for Gaussian blur, the kernel size and standard deviation is set to 3 and 0.5, respectively; for pixel removal, which is used to simulate sparse measurement in range sensors, we first randomly remove 30\% pixels and then fill them with the average value of three nearest neighbors; for low-bit compression, we simulate it by clip the floating precision to decimeter.

We compare our method against a range of DSR and related methods. For methods originally designed for affine-invariant depth estimation (DA v2 and MG-LCM), we apply a Theil–Sen regression-based affine calibration to align their predictions with the ground truth scale before computing quantitative metrics. For depth completion-oriented approaches (OMNI-DC and MG-DC), we place the original LR depth values onto the corresponding upsampled grid positions and leave all other pixels empty, allowing the models to perform completion as intended in their formulations.

\begin{figure*}[t]
	\centering
	\includegraphics[width=\linewidth]{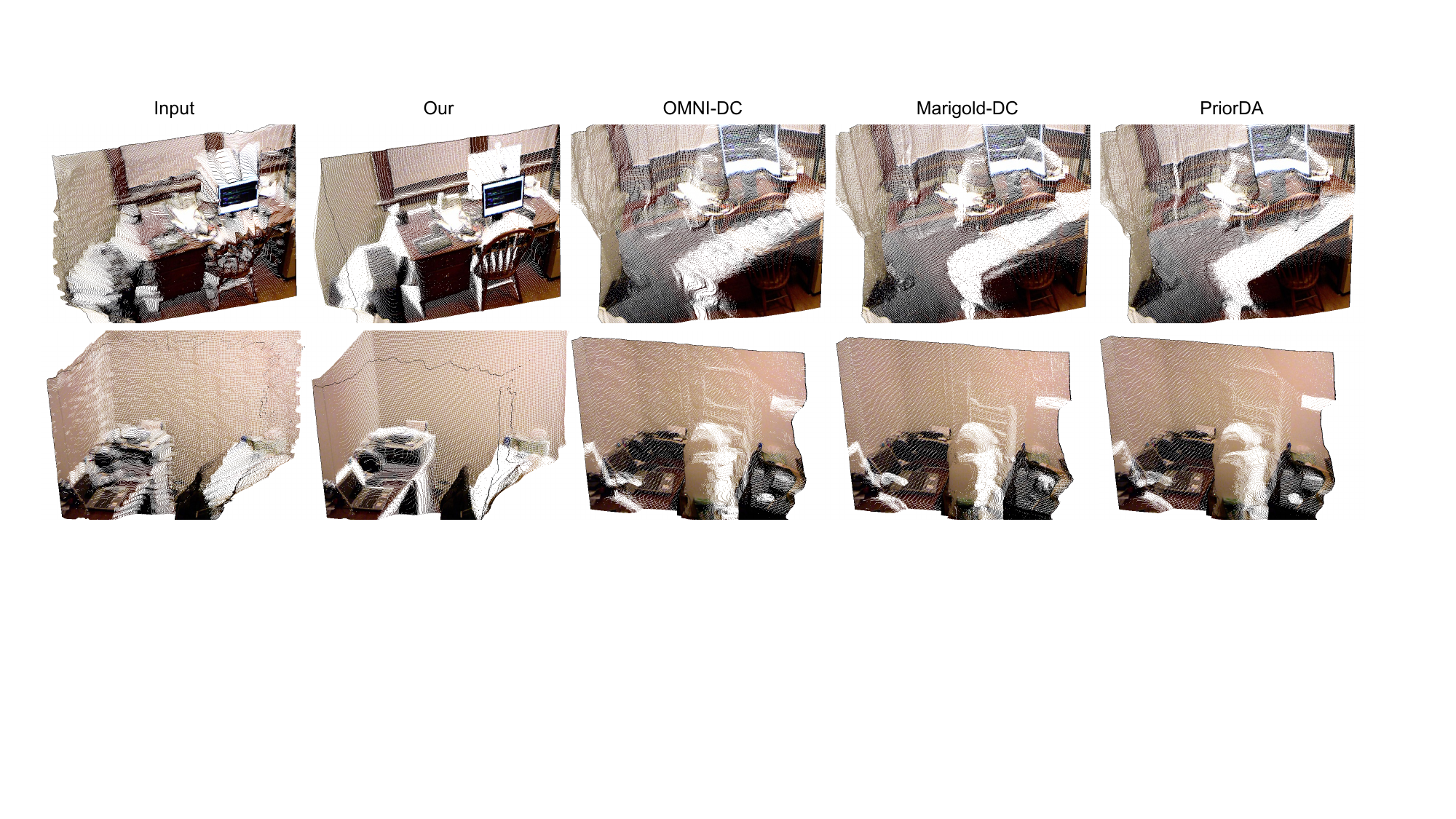}
	\caption{
		Qualitative comparison on NYUv2 dataset. The estimated depth map is back-projected to 3D point clouds to better demonstrate our superiority in the geometric accuracy.
	}
	\label{fig:synthetic_nyu}
\end{figure*}
\begin{figure*}[t]
	\centering
	\includegraphics[width=\linewidth]{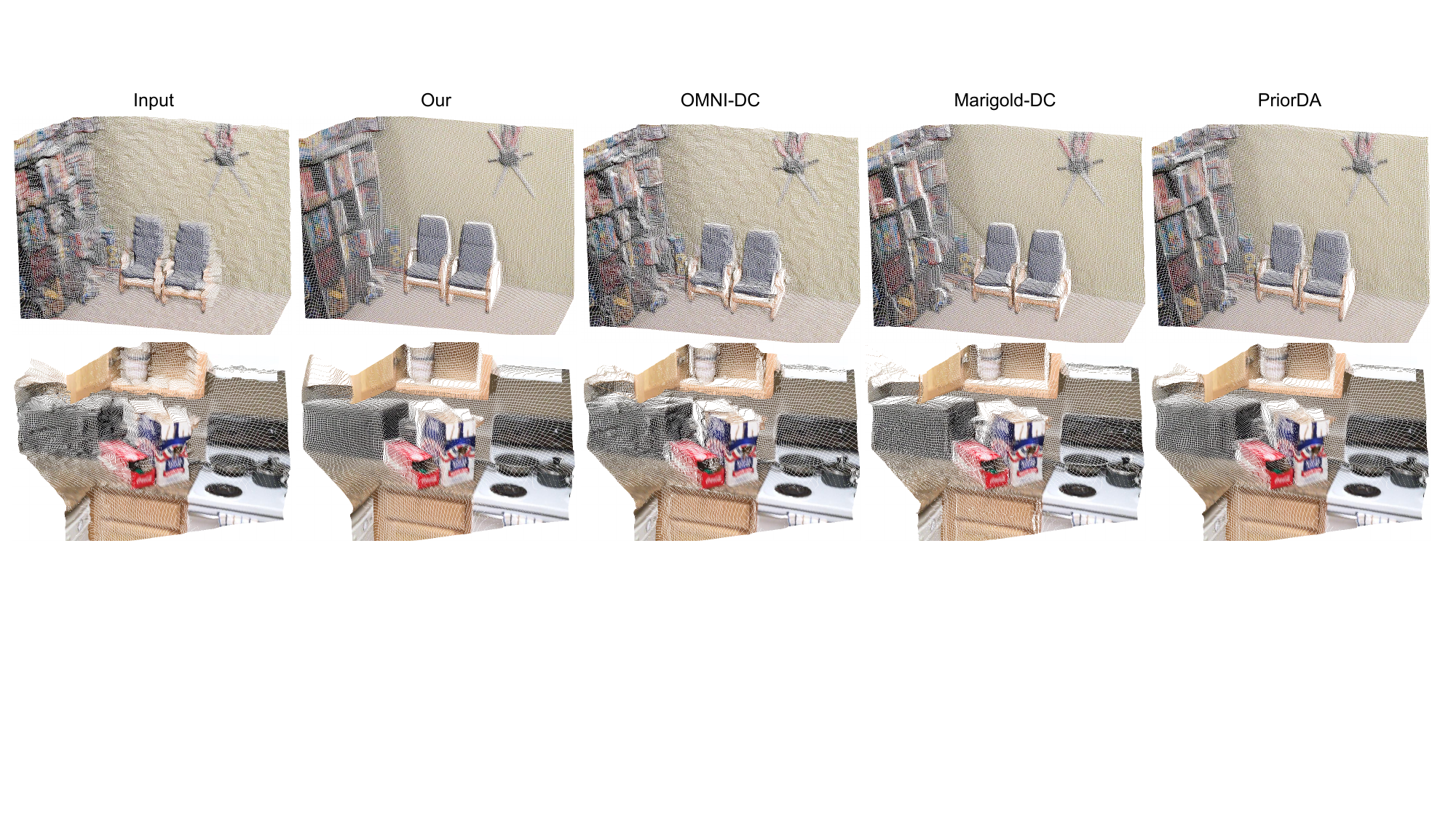}
	\caption{
		Qualitative comparison on ScanNet dataset. The estimated depth map is back-projected to 3D point clouds to better demonstrate our superiority in the geometric accuracy.
	}
	\label{fig:synthetic_scannet}
\end{figure*}

We employ three metrics to evaluate the depth prediction accuracy, including Root Mean Squared Error (RMSE) and Mean Absolute Error (MAE) for absolute accuracy, and threshold accuracy $\delta_{1.05}$ for relative precision. Denote the predicted and ground truth depth map as $\hat{\mathbf{d}}$ and $\mathbf{d}$, respectively, the above metrics are defined as follows:
\begin{equation}\label{equ.metrics}
	\begin{split}
		\text{RMSE}&=\sqrt{\frac{1}{|\mathbf{d}|}\sum{\Vert\hat{\mathbf{d}}-\mathbf{d}\Vert_2^2}},\\ 
		\text{MAE}&=\frac{1}{|\mathbf{d}|}\sum{\Vert \hat{\mathbf{d}}-\mathbf{d}\Vert_1},\\ 
		\delta_{1.05}&= \max (\frac{\hat{\mathbf{d}}}{\mathbf{d}},\frac{\mathbf{d}}{\hat{\mathbf{d}}})<1.05,
	\end{split}
\end{equation}
where $|\mathbf{d}|$ is the number of valid pixels in ground truth depth maps.

\section{Additional Experimental Result}
Qualitative comparison on real-world benchmarks is reported in Fig. \ref{fig:real_world_sup}. We further compare existing methods on geometric accuracy by back-projecting the estimated depth to 3D point clouds. The results on NYUv2 and ScanNet are reported in Fig. \ref{fig:synthetic_nyu} and Fig. \ref{fig:synthetic_scannet}, respectively.

\section{Limitation and Future Work}
While the proposed framework demonstrates robust performance, it is currently optimized specifically for the depth super-resolution task. However, the broader field of depth perception encompasses several critical challenges that remain outside the scope of this study, such as depth estimation, which focuses on inferring high-quality depth maps solely from monocular imagery, and depth completion, which targets the recovery of dense depth information from sparse, non-uniform measurements generated by LiDAR sensors, ToF cameras, or Structure-from-Motion (SfM) pipelines. Developing a unified foundational model capable of addressing these diverse depth-related problems simultaneously holds significant practical value for real-world applications in robotics and autonomous systems. Consequently, bridging the gap between various depth-related problems within a generalized architecture represents a primary direction for our future research.
%
%Our framework currently only focus on the depth super-resolution task. However, there are many real-world depth-related tasks, such as depth estimation, which aims to estimate high-quality depth maps only from monocular images; and depth completion, which targets on recovering a dense depth map from sparse measurement of various sparse patterns, such as LiDAR scanning, ToF camera scanning and structure-from-motion (SfM) methods. Exploring an unified model to address all depth-related problems have practical value in many real-world applications, which is a principle research direction of our future work.

%You can have as much text here as you want. The main body must be at most $8$
%pages long. For the final version, one more page can be added. If you want, you
%can use an appendix like this one.
%
%The $\mathtt{\backslash onecolumn}$ command above can be kept in place if you
%prefer a one-column appendix, or can be removed if you prefer a two-column
%appendix.  Apart from this possible change, the style (font size, spacing,
%margins, page numbering, etc.) should be kept the same as the main body.
%%%%%%%%%%%%%%%%%%%%%%%%%%%%%%%%%%%%%%%%%%%%%%%%%%%%%%%%%%%%%%%%%%%%%%%%%%%%%%%
%%%%%%%%%%%%%%%%%%%%%%%%%%%%%%%%%%%%%%%%%%%%%%%%%%%%%%%%%%%%%%%%%%%%%%%%%%%%%%%

\end{document}